\newtheorem{theorem}{Theorem}[section]
\newtheorem{corollary}[theorem]{Corollary}
\newtheorem{proposition}[theorem]{Proposition}
\theoremstyle{definition}
\newtheorem{definition}[theorem]{Definition}
\newtheorem{question}[theorem]{Question}
\theoremstyle{remark}
\newtheorem{remark}[theorem]{Remark}
\newcommand{\mc}[1]{\mathcal{#1}}
\newcommand{\bfSigma}{\boldsymbol{\Sigma}}
\newcommand{\bfPi}{\mathbf{\Pi}}
\newcommand\mathcircled[1]{%
	\mathpalette\@mathcircled{#1}%
}
\newcommand\@mathcircled[2]{%
	\tikz[baseline=(math.base)] \node[draw,circle,inner sep=1pt] (math) {$\m@th#1#2$};%
}
\begin{document}
	
	\title{Computable learning of natural hypothesis classes}
	\author{Syed Akbari\thanks{Syed Akbari was supported by an REU at the University of Michigan funded by the University of Michigan LSA Internship Scholarship.} \and Matthew Harrison-Trainor\thanks{The second author was partially supported by the National Science Foundation under Grant DMS-2153823.}}
	
	\maketitle

    \begin{abstract}
        This paper is about the recent notion of computably probably approximately correct learning, which lies between the statistical learning theory where there is no computational requirement on the learner and efficient PAC where the learner must be polynomially bounded. Examples have recently been given of hypothesis classes which are PAC learnable but not computably PAC learnable, but these hypothesis classes are unnatural or non-canonical in the sense that they depend on a numbering of proofs, formulas, or programs. We use the on-a-cone machinery from computability theory to prove that, under mild assumptions such as that the hypothesis class can be computably listable, any natural hypothesis class which is learnable must be computably learnable. Thus the counterexamples given previously are necessarily unnatural.
    \end{abstract}

    \section{Introduction}

    In the setting of binary classification learning we consider Probably Approximately Correct (PAC)-learning. The learner must, on most training sets (\textit{probably}), learn the binary classification with only small error (\textit{approximately}). There have generally been two approaches. The first, due to Valiant \cite{Valiant1984ATO}, is \textit{efficient} PAC learning where there must be a polynomial-time learning algorithm whose error is also polynomially bounded. The second is the statistical Vapnik-Chervonenkis \cite{VC} theory which gives a classification, in terms of VC-dimension, of the hypothesis classes which may be PAC-learned. In this VC-theory, the learners are arbitrary functions and there is no requirement that the learners be implementable by an algorithm, let alone an efficient one.

    Starting with Agarwal, Ananthakrishnan, Ben-David, Lechner, and Urner \cite{OLWCL}, a recent series of papers \cite{OCLWCL,FWOS} has investigated an intermediate notion where the learner is required to be computable but without placing any resource bounds on the learner. Part of the motivation for this line of inquiry is the recent discovery that learnability in general \cite{LCBU} and PAC learning in particular \cite{UoL} is sensitive to set-theoretic considerations such as the continuum hypothesis. To try to avoid this, Agarwal et al.\ introduced a new notion of \textit{computably probably approximately correct (CPAC)}  learning. Trivially CPAC learning implies PAC learning, but in \cite{OLWCL} it is shown that there are PAC-learnable hypothesis classes, i.e., with finite VC-dimension, which are not CPAC-learnable. Their counterexample is the hypothesis class
    \[ \mc{H} :=
        \{ h_{i,j} : \text{the $i$th proof is a proof of the $j$th formula of arithmetic}\}\]
    where
    \[ h_{i,j} (k) = \begin{cases}
        1 & k = 2i \\ 
        1 & k = 2j+1 \\
        0 & \text{otherwise}
    \end{cases}.\]
    This counterexample is a rather unnatural one, not looking like the hypothesis classes one might want to learn in practice. In particular, there is no canonical meaning to ``the $n$th formula of arithmetic'' and what functions are in the hypothesis class depend on arbitrary choices like how we number the formulas of arithmetic. There is no real answer to ``what is the 5th formula of arithmetic'' but which functions are in the hypothesis class $\mc{H}$ depends on the answer to such questions. (Commonly given examples of learnable problems, like upper half planes, are on the other hand much more canonical.)

    In this paper, we investigate whether this unnaturalness is a necessary feature of any counterexample. To make this formal, we appeal to a well-established idea from computability theory of properties holding ``on-a-cone''. The full definitions are somewhat technical, and we treat it in more detail in Section \ref{sec:on-a-cone}. Recall that an oracle is a ``black box'' provided to a computer which is capable of solving certain (often non-computable) problems in a single operation. Given a proof/statement about computers and programs, its relativization to an oracle $x$ is the replacement of every instance of a computer or program in that proof/statement by a computer or program with oracle $x$. One of the more well-known uses of oracles in computer science is the Baker-Gill-Solovay theorem, which says that there are oracles $x$ with $\mathsf{P}^x = \mathsf{NP}^x$ and also oracles $y$ with $\mathsf{P}^y \neq \mathsf{NP}^y$. Here, e.g., $\mathsf{P}^x$ is the class of decision problems solvable in polynomial time by a machine with oracle $x$. The Baker-Gill-Solovay theorem is a no-go theorem. It says that the $\mathsf{P}$ vs $\mathsf{NP}$ problem cannot be solved by proof techniques that relativize. This is because such a resolution, say a proof that $\mathsf{P} \neq \mathsf{NP}$, would relativize to the oracle $x$ from the  Baker-Gill-Solovay theorem to prove that $\mathsf{P}^x \neq \mathsf{NP}^x$, contradicting the fact that $\mathsf{P}^x = \mathsf{NP}^x$. However almost all standard proof techniques relativize, and so $\mathsf{P}$ vs $\mathsf{NP}$ cannot be resolved by these standard techniques.
    
    Around the same time, Martin posed his famous conjecture about the Turing degrees of natural problems. To capture what was meant by ``natural'', Martin used an approach often called ``on a cone'' which we will make use of in this paper. (See, e.g., \cite{Mon19} for a recent exposition of Martin's conjecture and the general ``on a cone'' approach. The approach has been used in a number of recent papers, e.g., \cite{}.) The general idea is as follows. Just like almost all proof techniques can be relativized, most natural statements can be relativized. That is, given a statement $\mathsf{P}$ which can be relativized to an oracle $x$ to give the statement $\mathsf{P}^x$, if $\mathsf{P}$ is natural, then $\mathsf{P}$ is true if and only if $\mathsf{P}^x$ is true. (There are natural propositions that do not easily relativize, such as Hilbert's Tenth problem; we do not consider these.) While natural propositions maintain their truth after relativization, unnatural propositions might change their truth value as we relativize to different oracles. However, the results of Martin \cite{MartinTuring,BorelDet} say that for propositions $\mathsf{P}$ which are not set-theoretically complicated, as we relativize to more and more powerful oracles, eventually the truth value of $\mathsf{P}$ stabilizes to a limit. This is the truth value of $\mathsf{P}$ \textit{on a cone}. The idea is that if $\mathsf{P}$ is natural, then the truth value of $\mathsf{P}$ relativizes, and $\mathsf{P}$ is true if and only if $\mathsf{P}$ is true on a cone. Thus: \textit{If we prove a theorem on a cone, then the theorem holds computably (with no oracle) when applied to natural objects.} 

    We can now restate the informal question ``if a \textit{natural} hypothesis class is PAC-learnable, is it CPAC-learnable?'' as the formal question ``if a hypothesis class is PAC-learnable, are its relativizations to an oracle CPAC-learnable on a cone?''

    Our main results are that this is usually true under various mild assumptions. We state below the informal statements of our theorems (using the word natural) and re-use the same numbering for the formal statements (using on-a-cone) which appear later in this paper. The theorems all have the same flavour, but use slightly different assumptions. There are also certain technical issues we have not yet addressed, such as whether or not the sample size function must be computable; these will be clarified in the formal statements.

    \begin{theorem}[Informal]
        If a natural hypothesis class is c.e.-represented and PAC-learnable, then it is (properly) CPAC-learnable.
    \end{theorem}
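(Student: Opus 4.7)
The plan is to implement the standard realizable-case consistent learner using the c.e.\ representation and invoke the on-a-cone framework only to handle computability of the sample complexity. The central ingredient is the fundamental theorem of statistical learning: PAC-learnability of $\mathcal{H}$ is equivalent to $d := \mathrm{VCdim}(\mathcal{H}) < \infty$, and in that case any learner which, given a realizable sample $S$, returns some $h \in \mathcal{H}$ labeling $S$ correctly is a PAC learner with sample complexity $m(\epsilon,\delta) = O\bigl((d\log(1/\epsilon) + \log(1/\delta))/\epsilon\bigr)$.

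First I would fix a uniformly computable list $(h_i)_{i\in\N}$ witnessing the c.e.\ representation of $\mathcal{H}$. The candidate learner $L$ is the naive one: on confidence parameters $(\epsilon,\delta)$ and a training sample $S$ of size $m(\epsilon,\delta)$, search through $h_0,h_1,h_2,\dots$ until the first index $i$ for which $h_i$ is consistent with $S$, and return $h_i$. In the realizable case such an $i$ exists, so the search terminates, and the returned hypothesis is consistent with $S$; by the fundamental theorem, $L$ is then a \emph{proper} PAC learner with the stated sample complexity. The search itself is computable uniformly in the enumeration $(h_i)$ and in the finite sample $S$.

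The only remaining piece is computability of $m(\epsilon,\delta)$, which hinges on knowing $d$. This is exactly where the on-a-cone framework earns its keep: the statement ``$\mathrm{VCdim}(\mathcal{H}) \leq k$'' is arithmetical in the c.e.\ representation (``$\mathcal{H}$ shatters some $(k{+}1)$-set'' is $\Sigma^0_1$ in $(h_i)$, so the least $k$ for which this fails is computable from a sufficiently high jump of the representation). Hence there is a base oracle $x_0$ that computes $d$, and for every $x \geq_T x_0$ the enumeration and $m$ are both $x$-computable, so $L$ is an $x$-computable proper CPAC learner for $\mathcal{H}$. This gives proper CPAC-learnability on the cone above $x_0$, which is the on-a-cone form of the theorem.

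The main obstacle, and the reason the relativization is necessary, is precisely extracting $d$ from only a c.e.\ representation: a pathological enumeration (the sort of object that fails to be natural) can postpone every shattering witness indefinitely, so $d$ need not be computable outright. The on-a-cone passage absorbs this single noncomputable step by granting the learner an oracle of sufficient arithmetical strength, after which everything else in the ERM strategy goes through uniformly.
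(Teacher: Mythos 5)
Your proposal has a genuine gap: it only works in the realizable setting, but the paper's definition of PAC learnability is agnostic. Re-read the definition: the learner must succeed for \emph{every} distribution $\mc{D}$ over $\mc{X} \times \mc{Y}$, with guarantee $L_{\mc{D}}(h') \leq \min_{h \in \mc{H}} L_{\mc{D}}(h) + \epsilon$. For a general distribution there may be no $h \in \mc{H}$ consistent with the sample $S$, in which case your search ``until the first index $i$ for which $h_i$ is consistent with $S$'' never halts. Even restricting attention to an ERM rather than a consistent learner does not rescue the plan: enumerating $h_0, h_1, \ldots$ from the c.e.\ representation lets you compute $L_S(h_i)$ for each $i$ and keep a running minimum, but with a c.e.\ list you never know when you have reached the true minimum $\min_{h\in\mc{H}} L_S(h)$, since a better hypothesis could appear later in the enumeration. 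This is the real obstruction, and it is precisely where the on-a-cone machinery does its work in the paper.

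The paper's argument is structured differently. Rather than computing the VC dimension and working through the sample-complexity bound, it invokes the characterization that proper SCPAC $\iff$ PAC + computable ERM (Theorem \ref{keyPrevResultsThrm}(1)), so the sample size function comes for free once you have a computable ERM. The cone argument is then used, not to compute $d$, but to compute the function $f \colon S \mapsto \min_{h\in\mc{H}_x} L_S(h)$: for each fixed sample $S$ of size $m$ the minimum empirical risk takes one of the finitely many values $0, 1/m, \ldots, 1$, so by degree-invariance and determinacy there is a cone on which the value is constant, and joining these cone bases over all (countably many) samples $S$ produces a single cone on which $f$ is known and computable. With $f$ as an oracle, the c.e.\ search becomes a bounded search (``enumerate until you find $n \in W_e^x$ with $L_S(\Phi_n^x) = f(S)$''), which now terminates. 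So your instinct that on-a-cone must ``absorb one noncomputable step'' is right, but you aimed it at the wrong step: the bottleneck is not the VC dimension or sample size, it is identifying the minimum empirical risk in the agnostic case.
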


    \noindent We state this theorem first as it is the one that directly addresses the examples of PAC-learnable but not CPAC-learnable hypothesis classes from \cite{OLWCL}. By a c.e.-represented hypothesis class we mean that one can effectively list the functions in the hypothesis class.

    In proving the above result, we realised that there were other assumptions under which one can prove the same (or similar) conclusions. 

    \begin{theorem}[Informal]
        If a natural hypothesis class is topologically closed and PAC-learnable, then it is (properly) CPAC-learnable.
    \end{theorem}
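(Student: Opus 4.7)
The plan is to apply the on-a-cone machinery by exhibiting a base oracle above which $\mc{H}$ becomes properly CPAC-learnable via a tree-based empirical risk minimization algorithm. Since $\mc{H} \subseteq 2^{\mathbb{N}}$ is topologically closed, it equals $[T]$ for the canonical tree $T = \{\sigma \in 2^{<\mathbb{N}} : (\exists h \in \mc{H})\ \sigma \prec h\}$. PAC-learnability forces the VC-dimension $d$ to be finite, so the usual ERM sample complexity $m(\epsilon,\delta)$ is a computable function of $(\epsilon,\delta)$ once $d$ is treated as a fixed constant hard-coded into the learner.

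For the base of the cone I would take any oracle $z$ that computes the Turing jump $T'$. For every $x \geq_T z$, I then run the following $x$-computable learner on a sample $S$ with points $x_1, \ldots, x_n$: enumerate the $2^n$ candidate labelings $\tau \in \{0,1\}^n$ and, for each, use $x$ to decide whether the subtree $T_\tau \subseteq T$ consisting of nodes consistent with $\tau$ is infinite. Because $T_\tau$ is a binary, finitely branching tree, K\"onig's Lemma makes this equivalent to the existence of some $h \in \mc{H}$ with $h(x_i) = \tau_i$ for all $i$, and the predicate ``$T_\tau$ is infinite'' is $\Pi^0_1(T)$ and hence decidable in $T'$. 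Among the achievable labelings, select $\tau^*$ maximizing agreement with $S$, and output an $x$-index of the hypothesis $h^* \in \mc{H}$ defined as the leftmost path through $T$ consistent with $\tau^*$; each bit of $h^*$ is determined by a single $\Pi^0_1(T)$ query, and closedness of $\mc{H}$ ensures $h^* \in [T] = \mc{H}$, which delivers properness. Correctness then reduces to the classical VC analysis, since the algorithm performs exact ERM with a computable sample-size function.

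The main obstacle is the effective implementation of ERM over a closed class, and the crucial technical observation is that ``$\tau$ is achievable by some $h \in \mc{H}$'' is $\Pi^0_1$ relative to $T$; this is exactly what forces working above $T'$ rather than $T$ itself, and is the step where topological closedness (via K\"onig's Lemma on the binary tree $T_\tau$) is doing the real work. Secondary points that will need careful verification are that the leftmost-path construction packages as a uniformly $x$-computable function so the learner returns a genuine finite index of its output hypothesis, and that the sample-complexity function $m(\epsilon,\delta)$ can be made computable in the precise sense demanded by the CPAC definition used in the paper.
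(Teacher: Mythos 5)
Your argument essentially reproduces the second half of the paper's proof of the formal version of this theorem, namely the paper's Proposition 4.6: once you have a fixed pruned tree $T$ with $\mc{H} = [T]$, any oracle computing $T$ can run an exact ERM by checking which labelings of the sample are achievable against $T$ and outputting the leftmost consistent path. But in the paper's formalization a ``natural hypothesis class'' is not a single closed set $\mc{H}$; it is a degree-invariant \emph{Borel family} $\mc{H}_x$ with each section closed, and the key first step (the paper's Proposition 4.5) is to show that on a cone there is a \emph{single} tree $T$ such that $\mc{H}_x = [T]$ for every $x$ on that cone. That step is not automatic: the canonical map $x \mapsto T_x$, where $T_x$ is the set of initial segments of members of $\mc{H}_x$, is a priori only $\bfSigma^1_1$ (membership of $\sigma$ in $T_x$ is an existential quantifier over the Borel set $\mc{H}$), so the sets $B_\sigma = \{ x : \sigma \in T_x \}$ are not obviously Borel and Borel determinacy does not directly apply. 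The paper invokes the Kunugi--Novikov uniformization theorem to get a Borel assignment $x \mapsto T_x$, prunes it, and only then uses determinacy and countable additivity of Martin's measure to stabilize $T_x$ to a single $T$ whose degree can be absorbed into the cone base. By treating $\mc{H}$ as a fixed set with a fixed $T$, you have silently skipped the part of the argument that carries the actual technical weight.

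A smaller point: the detour through $T'$ is unnecessary. The tree $T$ you define is pruned by construction (every node extends to a path), so $T_\tau$ is infinite iff $T$ contains some node of length $1 + \max_i u_i$ consistent with $\tau$ --- a finite, bounded check against $T$ itself. Likewise the leftmost consistent path is computed one bit at a time by direct membership queries to $T$. Since everything is happening on a cone, going to $T'$ does not hurt correctness, but it is wasteful; the paper's Proposition 4.6 gets the same conclusion with $T$ rather than $T'$ in the base.
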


    \noindent The hypothesis class being topologically closed means that the limit of any sequence of functions in the hypothesis class is itself in the hypothesis class. This is a natural condition which would hold of many natural geometric examples, like half-planes.

    \begin{theorem}[Informal, AD]
        If a natural hypothesis class is PAC-learnable, then it is improperly CPAC-learnable.
    \end{theorem}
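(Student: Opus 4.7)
The plan is to directly exhibit a cone of oracles witnessing improper CPAC-learnability, by encoding the abstract set-theoretic PAC learner (guaranteed by PAC-learnability) as a single real and then relativizing. First I would invoke the fundamental theorem of statistical learning: since $\mc{H}$ is PAC-learnable it has finite VC-dimension $d$, and there exists a (possibly non-computable) PAC learner $L$ whose sample complexity $m_{\mc{H}}(\epsilon,\delta)$ depends only on $d,\epsilon,\delta$ and is therefore computable. Viewing $L$ as a function sending triples $(S,\epsilon,\delta)$ (a finite sample $S$ together with rational parameters) to hypotheses in $2^{\omega}$, we have a function $\omega \to 2^{\omega}$, which re-indexes as a function $\omega \times \omega \to \{0,1\}$ and is thereby coded by a single real $x_L \in 2^{\omega}$. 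This re-indexing is merely a bijection of sets, requires no form of choice, and so goes through in the AD context in which the theorem is stated.

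Next I would observe that relative to the oracle $x_L$, the learner $L$ becomes uniformly computable: given $(S,\epsilon,\delta)$, one effectively produces an index of an $x_L$-oracle Turing machine which, on input $k$, queries $x_L$ to return $L(S,\epsilon,\delta)(k)$. Define the $x_L$-computable learner $A^{x_L}$ to output exactly this index. The resulting hypothesis is an $x_L$-computable function (not required to lie in $\mc{H}$, which is precisely what distinguishes improper from proper learning), and by construction agrees pointwise with $L(S,\epsilon,\delta)$, so it inherits the PAC guarantee with the same computable sample complexity $m_{\mc{H}}$. Since the same construction applies relative to any oracle $y \geq_T x_L$, the cone above the Turing degree of $x_L$ witnesses improper CPAC-learnability of $\mc{H}$; under AD this is exactly what it means for the property to hold on a cone.

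The main point to verify carefully is that the definition of improper CPAC learning relative to an oracle $x$ genuinely admits $x$-oracle Turing machines as output hypotheses (rather than only plain Turing machines); this should be implicit in the standard relativization but needs to be stated cleanly. A secondary point is that no ERM-style enumeration of $\mc{H}$ is needed anywhere. The proof thus exploits the asymmetry between proper and improper learning highlighted by the earlier theorems in the paper: those require an effective handle on the elements of $\mc{H}$ in order to output one, whereas improper learning permits any $x$-computable function as output, so paying the single oracle $x_L$ that codes the abstract learner is always sufficient, regardless of any internal structure of $\mc{H}$.
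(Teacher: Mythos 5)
There is a genuine gap: you treat the hypothesis class $\mc{H}$ as a single fixed object, but the formal statement (Theorem \ref{thm:det}) concerns a \emph{degree-invariant family} $x \mapsto \mc{H}_x$, which may vary nontrivially with $x$. That variation is the entire content of the ``naturalness'' formalism. Your plan is to pick one abstract learner $L$, encode it as a real $x_L$, and relativize to any $y \geq_T x_L$. But $L$ is a learner for which class? If you fix some $x_0$ and pick $L$ for $\mc{H}_{x_0}$, then for $y \geq_T x_L$ your $A^y$ mimics $L$ and only achieves the PAC guarantee for $\mc{H}_{x_0}$, whereas what is required is a learner achieving $\min_{h \in \mc{H}_y} L_{\mc{D}}(h) + \epsilon$, which may be strictly smaller---consider a family where $\mc{H}_x$ and $\mc{H}_y$ are disjoint for $x \nequiv_T y$, as in the paper's $F_\sigma$ counterexample. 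Two further symptoms of the same misreading: you never actually invoke AD (its appearance in your last sentence is not load-bearing---``holds on a cone'' is defined without any determinacy hypothesis; determinacy is what guarantees a dichotomy between ``on a cone'' and ``off a cone''), and your argument, read as written, produces a \emph{proper} learner since $L(S) \in \mc{H}$ always, so it would prove a conclusion strictly stronger than the theorem. When an argument appears to prove more than what is claimed, that is usually a sign something essential has been dropped.

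The paper's actual route is through Theorem \ref{keyPrevResultsThrm}(3): improper SCPAC is equivalent to having a computable $d$-witness function. Degree-invariance guarantees the VC-dimension $d$ stabilizes on a cone; then for each tuple $\bar u$ one partitions $2^{\mathbb{N}}$ by the value of the least non-shatterable labeling $w_x(\bar u)$, and AD (or analytic determinacy in the Borel case, as in Corollary \ref{cor:det}) is invoked precisely because these partition pieces are degree-invariant but possibly non-Borel, so Martin's cone theorem needs a determinacy hypothesis to apply. Stabilizing each $w_x(\bar u)$ and intersecting countably many cones yields a single $d$-witness $w$ valid across the whole cone, independent of how $\mc{H}_x$ varies. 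The $d$-witness is a finitary combinatorial object that can be frozen in this way; a full learner for a varying family cannot, which is exactly why the theorem delivers improper rather than proper learnability and why your encoding-a-single-learner strategy does not go through.
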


    \noindent Note that in this case we only get improper learnability. This theorem is proved under the assumption of the Axiom of Determinacy, which is incompatible with the Axiom of Choice. So while one might not take the Axiom of Determinacy to be true, one should think of this theorem as saying that strong set-theoretic assumptions would be required to get a natural example of a PAC-learnable but not improperly CPAC-learnable hypothesis class.

    On the other hand, without these assumptions, one can show that there are natural hypothesis classes that are PAC-learnable but not CPAC-learnable.

    \begin{theorem}[Informal]
        There are natural hypothesis classes, consisting only of computable functions, which are $F_\sigma$ and PAC-learnable but not CPAC-learnable.
    \end{theorem}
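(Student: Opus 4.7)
My plan is to construct, uniformly in an oracle $x$, a hypothesis class $\mc{H}^x$ of $x$-computable functions that is $F_\sigma$ and has finite VC-dimension (so is PAC-learnable), but admits no $x$-computable CPAC-learner. Since the construction is uniform in $x$, the resulting family is natural in the sense of the on-a-cone framework of Section~\ref{sec:on-a-cone}, so that the conclusion holds on a cone. The construction proceeds by a diagonalization against all $x$-computable candidate learners, restricted to a structured concept family in order to keep the VC-dimension bounded.

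Fix an effective enumeration $\{(A^x_e, m^x_e)\}_{e \in \N}$ of pairs of partial $x$-computable functions, interpreted as candidate learners together with their sample complexity functions. Reserve pairwise disjoint finite blocks $B_e \subseteq \N$, with $|B_e|$ much larger than $m^x_e(\tfrac13, \tfrac13)$ (when defined). For each $e$ such that $(A^x_e, m^x_e)$ is a total pair, use the jump $x'$ to simulate $A^x_e$ on candidate distributions $D^x_e$ supported in $B_e$ and candidate target concepts $c^x_e$ drawn from a structured subfamily $\mc{F}_e$ of $\{0,1\}^{B_e}$ (for instance, indicator functions of subsets of $B_e$ of a fixed size $k$). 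By an averaging argument on $\mc{F}_e$, one extracts some $(D^x_e, c^x_e)$ on which $A^x_e$, given $m^x_e(\tfrac13, \tfrac13)$ samples, produces a hypothesis of error exceeding $\tfrac13$ with probability at least $\tfrac13$. The choice of the pair $(D^x_e, c^x_e)$ is $x'$-computable, but the resulting $c^x_e$ is itself $x$-computable (indeed, it is the indicator of a finite set). Define $\mc{H}^x := \{c^x_e : e \in \N\}$.

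The properties are then straightforward: $\mc{H}^x$ is countable, hence $F_\sigma$; each $c^x_e$ is $x$-computable; the VC-dimension is bounded by a constant depending only on $k$ (because the blocks $B_e$ are disjoint and each $c^x_e$ has the same combinatorial shape), so $\mc{H}^x$ is PAC-learnable; and by construction no $x$-computable candidate learner $(A^x_e, m^x_e)$ is a valid CPAC$^x$-learner for $\mc{H}^x$, since $c^x_e \in \mc{H}^x$ witnesses its failure. Uniformity of the construction in $x$ then yields, via the on-a-cone framework, a natural hypothesis class with the claimed properties.

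The main obstacle is executing step two—reconciling the diagonalization with the finite VC-dimension constraint. Naive defeats using singletons or low-mass support concepts fail, because the learner can achieve small error simply by outputting the zero function. The resolution is to pick $D^x_e$ concentrated on a small subset of $B_e$ that hides the true support of $c^x_e$ inside a larger collection of plausible supports drawn from $\mc{F}_e$, so that with positive probability the learner's $m^x_e(\tfrac13,\tfrac13)$ samples are information-theoretically insufficient to distinguish $c^x_e$ from many alternatives in $\mc{F}_e$; a Yao-style averaging over $\mc{F}_e$ then pins down a specific bad $c^x_e$ for the given $A^x_e$. The secondary subtlety is ensuring that the $x'$-oracle used to carry out this choice produces, in the end, only $x$-computable concepts---this is automatic because a finite set is $x$-computable regardless of how its name is found. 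The same construction then localizes to any oracle $x$ on a cone, completing the proof.
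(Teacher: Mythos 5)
Your construction has a fatal flaw: it is not degree-invariant, which is precisely the notion of ``naturalness'' that the informal statement is asking for. Your family $\mc{H}^x$ is built by diagonalizing against an enumeration $\{(A^x_e, m^x_e)\}_{e}$ of $x$-computable candidate learners, so the concept $c^x_e$ you place in $\mc{H}^x$ depends on which program happens to have index $e$. If $x \equiv_T y$ but $x \neq y$, the $e$th program with oracle $x$ and the $e$th program with oracle $y$ typically compute different functions, so $c^x_e \neq c^y_e$ and $\mc{H}^x \neq \mc{H}^y$. ``Uniform in $x$'' (which your construction is) is not the same as ``degree-invariant.'' This is exactly the failure mode the paper highlights for the original counterexample $\{h_{s,e} : e \in K^x_s\}$, and the whole point of the on-a-cone framework is to rule it out; a construction that diagonalizes against a program listing is the paradigm case of an \emph{un}natural hypothesis class in this setting.

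There are two further problems even setting aside degree-invariance. First, the statement is about CPAC, not SCPAC: the sample-complexity function is permitted to be non-computable, so diagonalizing against computable pairs $(A^x_e, m^x_e)$ only rules out SCPAC learners. By Theorem \ref{keyPrevResultsThrm}(2) one must instead rule out every $x$-computable asymptotic ERM, and your construction does not address that. Second, your own ``main obstacle'' paragraph is unresolved: with only one hypothesis per block in $\mc{H}^x$ and VC dimension $\le 1$, any proper learner that defaults to some fixed $c^x_{e'}$ already has error at most the $D^x_e$-mass of $\mathrm{supp}(c^x_e)$, so forcing error $> 1/3$ requires concentrating mass on the support, at which point a handful of samples reveal the target. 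The Yao-style lower bound you invoke gives lower bounds for learning the large auxiliary family $\mc{F}_e$, not the singleton $\{c^x_e\} \subseteq \mc{H}^x$, and it is not clear the argument closes.

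The paper's actual construction sidesteps all of this by building degree-invariance in from the start. It fixes one explicit VC-dimension-$1$ family $\mc{F}$ of monotone $\{0,1\}$-valued functions on $\mathbb{Q}$ (essentially Dedekind cuts of irrationals) and sets $\mc{H}_x = \{ f \in \mc{F} : f \equiv_T x \}$. This depends only on $\deg_T(x)$, is countable (hence $F_\sigma$), and consists of $x$-computable functions. To rule out CPAC on a cone, the paper does not diagonalize at all; instead it applies Martin's argument to the putative learner: choosing a canonical (lexicographically least) $x$-computable asymptotic ERM $A_x$ degree-invariantly, its outputs $A_x(S)$ stabilize on a cone to a single function $g_S$, which can lie in $\mc{H}_x$ for at most one degree---contradicting that $A_x$ properly learns $\mc{H}_x$ on the whole cone. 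The key trick, which your proposal is missing, is to make the hypothesis classes pairwise disjoint across degrees and then exploit degree-invariance of the learner itself.
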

    
    \noindent While natural from the computational standpoint expounded above, such examples are of course less natural from the learning theory standpoint as one is much less likely to want to learn an $F_\sigma$ hypothesis class.

    \section{Preliminaries and notation}

    \subsection{Classical statistical learning theory}

    We begin with a brief overview of the PAC-learning framework and fix our notation. See, e.g., \cite{UML} for a recent text covering this material.
    
    Let $\mc{X} = \mathbb{N}$ stand for the \textit{domain} or \textit{feature space}. This represents the set of all objects that we want to classify, or more precisely, the set of all combinations of possible object features that we want to label correctly. We fix $\mc{X} = \mathbb{N}$ here with $\mathbb{N}$ representing the G\"{o}del encoding of the feature space we actually care about. Let $\mc{Y} = \{0,1\}$ be the \textit{label space}.

    A \textit{hypothesis} is a function $h : \mc{X} \mapsto \mc{Y}$, and a \textit{hypothesis class} is a collection of hypotheses $\mc{H} \subseteq \mc{Y}^\mc{X}$. Given a PAC learning problem, the hypothesis class can be thought of as representing your complete knowledge of the problem or encoding your inductive priors, because it includes all the functions you think could represent the actual relationship between the features and labels and excludes all the others. Hypothesis classes can also thought of as all the functions representable by some machine learning algorithm, e.g., neural networks, linear regression, SVMs, etc.

    The training data is commonly assumed to be generated from some distribution $\mc{D}$ over $\mc{X} \times \mc{Y}$, and the performance of a hypothesis $h$ on this distribution is denoted by $L_{\mc{D}}(h) = \Pr_{(x,y) \sim D} [h(x) \neq y]$ and is called the \textit{true error}. The training data, usually called a \textit{sample}, $S$, is a finite sequence of feature-label pairs, so $S \in \mc{S} = \bigcup_{n \in \mathbb{N}}(\mc{X} \times \mc{Y})^n$. We measure the error of a hypothesis on the sample, also called the \emph{empirical risk}, with $ L_{\mc{S}}(h) = \frac{1}{m} \sum_{i=1}^{m} \mathbbm{1} [h(x_i) \neq y_i]$. The goal for a learner is to find a hypothesis minimizing the true error, but the learner only has access to $S$ and not $\mc{D}$, so it must work with the empirical risk instead as a proxy.
    
    \begin{definition}
        We say a hypothesis class $\mc{H}$ is \textit{PAC learnable} if there exists a learner function $A : \mc{S} \mapsto \mc{H}$ and a sample size function $m_{\mc{H}} : (0,1)^{2} \mapsto \mathbb{N}$ such that for every distribution $\mc{D}$ over $\mc{X} \times \mc{Y}$, for every error $\epsilon$, confidence $\delta$ $\in (0,1)^{2}$, and for every $m \geq m_{\mc{H}}(\epsilon, \delta)$, we have that $P_{S \sim \mc{D}^m}[L_{\mc{D}}(h') \leq \min_{h \in \mc{H}} L_{\mc{D}}(h) + \epsilon\ ] \geq 1 - \delta$.
    \end{definition}

    \noindent By default, the learner must output hypotheses inside the hypothesis class, and this is called \emph{proper} PAC learning, but we can relax this assumption to allow it to output hypotheses even outside of the hypothesis class, and we call this case \emph{improper} PAC learning.

    There are two very powerful characterization of PAC learnable classes given by the fundamental theorem of statistical learning that will be relevant for us: the first, based on the finiteness of a special combinatorial measure, called the $VC$ dimension, and the second, on the existence of a particular type of learner, called an empirical risk minimizer. Their definitions are as follow.

    \begin{definition}
        Given some collection of points $X = \{x_1, x_2, ... , x_k \} \subseteq \mc{X}$, we say $\mc{H}$ \textit{shatters} $X$ if $\forall y_1, y_2, ..., y_k \in \mc{Y} \ \exists h \in \mc{H}$ such that $\bigwedge_{i=1}^n h(x_i) = y_i$.        
    \end{definition}

    \begin{definition}
        The \textit{$VC$ dimension} of $\mc{H}$, $VC(\mc{H})$, is the maximum size of the set $X \subseteq \mc{X}$ it can shatter. If $\mc{H}$ can shatter arbitrarily large sets, we say its $VC$ dimension is infinite.
    \end{definition}

    Intuitively, we can say that a collection of domain points is shattered by a hypothesis class if every collection of labels for those points is expressible by the hypothesis class, so the $VC$ dimension can be thought of as a measure of expressiveness for hypothesis classes. The more expressive a hypothesis class, the more competing hypotheses there are to consider, and the more it is prone to overfitting.

    \begin{definition}
        A learner $A : \mc{S} \mapsto \mc{H}$ is called an \textit{empirical risk minimizer (ERM)} for $\mc{H}$ if given any sample $S \in \mc{S}$ the learner outputs a hypothesis that with the minimum empirical risk, i.e., $A(S) = \arg \min_{h \in \mc{H}} L_{S}(h)$.
    \end{definition}

    \begin{theorem}[Fundamental theorem of statistical learning]
            A hypothesis class $\mc{H}$ is PAC learnable if and only if the VC-dimension of $\mc{H}$ is finite. Moreover, if $\mc{H}$ is PAC learnable then any empirical risk minimizer for $\mc{H}$ witnesses this.
    \end{theorem}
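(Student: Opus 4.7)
The plan is to prove both directions of the equivalence, with the easier direction going from PAC learnability to finite VC dimension via a no-free-lunch style argument, and the harder direction going from finite VC dimension to PAC learnability via uniform convergence and the Sauer--Shelah lemma, with the ERM falling out as a byproduct of uniform convergence.

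For the forward direction, I would argue the contrapositive: if $VC(\mc{H}) = \infty$, then for every $m$ there is a set $C \subseteq \mc{X}$ of size $2m$ that $\mc{H}$ shatters. Given any candidate learner $A$ that receives a sample of size $m$, I would consider the uniform distribution $\mc{D}_f$ on $C$ labelled by a function $f : C \to \{0,1\}$, for each of the $2^{2m}$ possible $f$'s (all realized in $\mc{H}$ because $C$ is shattered). A standard averaging argument over $f$ chosen uniformly shows that the expected error of $A$ on the unseen half of $C$ is at least $1/4$, so there must exist some $f$ on which $A$ fails with non-trivial probability. Choosing $\epsilon$ and $\delta$ appropriately violates the PAC criterion for every sample size, so $\mc{H}$ is not PAC learnable.

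For the reverse direction, fix $d = VC(\mc{H}) < \infty$ and let $A$ be any ERM. The key intermediate step is the uniform convergence property: for every $\epsilon,\delta \in (0,1)$ there is an $m_{\mc{H}}^{UC}(\epsilon,\delta)$ such that for $m \geq m_{\mc{H}}^{UC}(\epsilon,\delta)$ and any distribution $\mc{D}$,
\[ \Pr_{S \sim \mc{D}^m}\!\left[\sup_{h \in \mc{H}} |L_S(h) - L_{\mc{D}}(h)| \leq \epsilon/2\right] \geq 1-\delta. \]
The standard route is symmetrization (the double-sample trick) to replace $\mc{D}$ by a uniform choice over a $2m$-sample, followed by a union bound over the projection of $\mc{H}$ onto that $2m$-sample, whose size is controlled by the Sauer--Shelah lemma: $|\mc{H}_{|T}| \leq \sum_{i=0}^{d}\binom{|T|}{i} = O(|T|^d)$. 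Plugging this polynomial bound into a Hoeffding-style tail estimate and solving for $m$ yields the required sample complexity. Once uniform convergence holds on a sample $S$, any ERM output $h' = A(S)$ satisfies $L_{\mc{D}}(h') \leq L_S(h') + \epsilon/2 \leq L_S(h^*) + \epsilon/2 \leq L_{\mc{D}}(h^*) + \epsilon$ where $h^* \in \arg\min_{h \in \mc{H}} L_{\mc{D}}(h)$, giving the PAC guarantee.

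The main obstacle is the symmetrization step together with obtaining the quantitative Sauer--Shelah bound, since the intuition ``finitely many behaviours on any finite sample'' only becomes a sample-complexity bound after one carefully (i) replaces the uncountable supremum over $\mc{H}$ by a finite supremum over behaviours on a random $2m$-sample, and (ii) inserts the polynomial growth function estimate into a concentration inequality. Everything else (no-free-lunch, passing from uniform convergence to ERM guarantees) is essentially bookkeeping once that core probabilistic combinatorial lemma is in hand.
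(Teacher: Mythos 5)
The paper states this theorem as background without proof, citing standard references like \cite{UML}; your sketch is a correct outline of the standard textbook argument (no-free-lunch for the forward direction, symmetrization plus Sauer--Shelah plus a concentration inequality yielding uniform convergence for the reverse direction, with the ERM guarantee falling out via the usual three-inequality chain). This matches the proof the paper implicitly defers to, so there is nothing to compare or correct.
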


    \subsection{Computable learning theory}

    We are now ready to give the basic definitions and theorems of computable learning theory from \cite{OLWCL}. All of these definitions and theorems relativize to any oracle, and given an oracle $x$ we use a superscript $^x$ for the relativization, e.g., CPAC$^x$ is the relativization of CPAC.
    
    \begin{definition}
        A hypothesis class $\mc{H}\subseteq \mc{Y}^\mc{X}$ is called \emph{computably PAC learnable (or CPAC learnable)} if it is PAC learnable and there is a computable learner $A : \mc{S} \mapsto \mc{H}$ as witness.
    \end{definition}
    
    \begin{definition}
        A hypothesis class $\mc{H}\subseteq \mc{Y}^\mc{X}$ is called \emph{strongly computably PAC learnable (or SCPAC learnable)} if it is PAC learnable and there is a computable learner $A : \mc{S} \mapsto \mc{H}$ and a computable sample size function witnessing this.
    \end{definition}

    We also extend the notions of \emph{proper} and \emph{improper} PAC learning to CPAC and SCPAC learning in the natural way. There are two more essential ideas we must define that characterize these various types of computable learnability before we can present the key theorems from the preceding papers that we make use of.

    \begin{definition}
        Let $\mc{H} \subseteq \mc{Y}^\mc{X}$ be a hypothesis class with $VC(\mc{H}) \leq d \in \mathbb{N}$. A \emph{$d$-witness} for $\mc{H}$ is a function $w : \mc{X}^{d+1} \mapsto \mc{Y}^{d+1}$ such that for each $\bar{u}$, $w(\bar{u}) = \bar{\ell}$ for some $\bar{\ell}$ such that there is no $h \in \mc{H}$ with $h(x_i) = \ell_i$ for all $i$.
    \end{definition}

    If $VC(\mc{H}) = d \in \mathbb{N}$, recall that means that no $d+1$ dimensional point can get shattered by $\mc{H}$, so for all $d+1$ dimensional points there exists some labeling not expressible by $\mc{H}$. A $d$-witness $w$ is just a function that takes every $d+1$ dimensional point to this label, hence witnessing that $VC(\mc{H}) \leq d$. If $VC(\mc{H})$ is infinite then there is no $d$-witness function for any $d \in \mathbb{N}$.

    \begin{definition}\label{defn:ERM}
        A learner $A : \mc{S} \mapsto \mc{H}$ is called an \emph{asymptotic empirical risk minimizer (asymptotic ERM)} for $\mc{H}$ if it outputs only hypotheses in $\mc{H}$, and if there is an infinite sequence $\{\epsilon_m \in [0,1]\}_{m=1}^{\mathbb{N}}$ converging to 0 as $m \mapsto \mathbb{N}$ such that for every sample $S \in \mc{S}$ we have that:
        \[L_S(A(S)) \leq \inf_{h \in \mc{H}}L_S(h) + \epsilon_{|S|}\]
    \end{definition}

    We can now present the effective versions of the fundamental theorem of statistical learning. The different notions of computable learning are no longer equivalent.

    \begin{theorem}[Effective fundamental theorem of statistical learning, \cite{OCLWCL,FWOS}]\label{keyPrevResultsThrm}
        Let $\mc{H} \subseteq \mc{Y}^\mc{X}$ be a hypothesis class of finite VC dimension. Then $\mc{H}$ is:
        \begin{enumerate}
            \item proper SCPAC learnable $\Longleftrightarrow$ there is a computable $ERM$ for $\mc{H}$
            \item proper CPAC learnable $\Longleftrightarrow$ there is a computable asymptotic $ERM$ for $\mc{H}$
            \item improper SCPAC learnable $\Longleftrightarrow$ improper CPAC learnable $\Longleftrightarrow$ there is a computable $d$-witness function for $\mc{H}$ (for some $d \in \mathbb{N})$
        \end{enumerate}
    \end{theorem}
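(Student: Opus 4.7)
The plan is to handle the three characterizations via a common template: each ($\Leftarrow$) direction builds a PAC learner from the combinatorial witness using the classical fundamental theorem of statistical learning, while each ($\Rightarrow$) direction extracts the witness from the abstract computable learner by running it on specially constructed empirical distributions.

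For part (1), the ($\Leftarrow$) direction is immediate: a computable ERM is PAC by the classical theorem, and the standard VC bound $m_{\mc{H}}(\epsilon,\delta) \asymp (d+\log(1/\delta))/\epsilon^2$ with $d = VC(\mc{H})$ is a computable function of $(\epsilon,\delta)$ for the fixed constant $d$, even if $d$ is not effectively known. For ($\Rightarrow$), given a sample $S$ of size $n$, form the uniform empirical distribution $\mc{D}_S$ on $S$ and note that $L_{\mc{D}_S}(h) = L_S(h)$ lives on the grid $\{0, 1/n, \ldots, 1\}$. Setting $\epsilon = 1/(2n)$, $\delta = 1/2$, and $m = m_{\mc{H}}(\epsilon,\delta)$ (computable), any ``good'' output of the SCPAC learner on a draw from $\mc{D}_S^m$ must be an \emph{exact} ERM on $S$, since its empirical gap $< 1/n$ collapses to zero. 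To derandomize, enumerate all $n^m$ possible $m$-samples from $S$, run the learner on each, and return the output of minimum empirical risk. For part (2), the same enumeration works once one drops the requirement that $m$ be computable: letting $m$ range up to, say, $2^n$ and using the same minimum-empirical-risk selection, for every fixed $\epsilon > 0$ eventually $2^n \geq m_{\mc{H}}(\epsilon, 1/2)$, and the resulting error sequence $\epsilon_n \to 0$ even though its rate is not computable, yielding an asymptotic ERM.

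For part (3), the ($\Leftarrow$) direction uses the computable $d$-witness $w$ to define the enlarged class $\mc{H}' := \{f : \mc{X} \to \mc{Y} : (f(u_1),\ldots,f(u_{d+1})) \neq w(\bar u) \text{ for all } \bar u \in \mc{X}^{d+1}\}$, which satisfies $\mc{H} \subseteq \mc{H}'$ by definition of a $d$-witness and $VC(\mc{H}') \leq d$ by the combinatorial observation that any shattered set of size $\geq d+1$ would realize the forbidden labeling on some $(d+1)$-subset. A computable ERM for $\mc{H}'$, using $w$ to check membership on the sample's finite domain, then yields an improper SCPAC learner with computable sample size inherited from the VC bound. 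For ($\Rightarrow$), given an improper CPAC learner $A$, I construct $w(\bar u)$ by testing each of the $2^{d+1}$ candidate labelings $\bar\ell$ for realizability: with $\epsilon < 1/(d+1)$, a realizable $\bar\ell$ forces the output of $A$ (on samples drawn from the uniform distribution on $\{(u_i,\ell_i)\}$) to agree with $\bar\ell$ on every $u_i$, while at least one unrealizable $\bar\ell$ must exist because $VC(\mc{H}) \leq d$. Derandomization again proceeds by enumerating samples, and dovetailing over $d = 1, 2, \ldots$ handles the fact that $d$ is unknown a priori.

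The principal obstacle is the ($\Rightarrow$) direction of (3): because the improper learner may output hypotheses outside $\mc{H}$, its output on an unrealizable labeling might still spuriously agree with $\bar\ell$ on all $u_i$, so the test for nonrealizability must be carefully arranged — for instance by mixing the candidate distribution with auxiliary probe points that force the PAC guarantee to discriminate against spurious fits. A secondary subtlety is the collapse of improper SCPAC and improper CPAC: once a computable $d$-witness exists, the sample complexity of the enlarged class $\mc{H}'$ is automatically computable from the VC bound, so the CPAC/SCPAC distinction disappears in the improper setting, in contrast to the proper case where it genuinely matters.
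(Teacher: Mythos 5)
This theorem is not proved in the paper at all: the authors explicitly defer to Proposition 1 of \cite{OCLWCL} for part (1), Proposition 2 of \cite{FWOS} for part (2), and Theorem 3 of \cite{FWOS} for part (3), so there is no in-paper proof to compare your sketch against. On its own terms, your template for parts (1) and (2) -- derandomize the computable learner by feeding it the finitely many $m$-samples from the empirical distribution $\mc{D}_S$ and returning the output of least empirical risk, with $\epsilon < 1/|S|$ forcing exactness on the $1/|S|$-grid -- is sound, and the trick of letting $m$ range up to a growing but computable bound (so that the error sequence $\epsilon_n$ exists but need not be computable) correctly recovers the asymptotic-ERM characterization in (2). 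You do elide the $\Leftarrow$ directions of (1) and (2) (that a computable (asymptotic) ERM really is a PAC learner, via uniform convergence), but those are routine.

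Part (3) is where the real content lives, and both directions have problems. You correctly flag the difficulty in $\Rightarrow$ (an improper learner can spuriously agree with an unrealizable labeling $\bar\ell$ on the probe tuple $\bar u$, so confirming all $2^{d+1}$ candidates tells you nothing), but the ``mix in auxiliary probe points'' remedy is not worked out and it is not clear how it would distinguish realizable from unrealizable labelings. The $\Leftarrow$ direction has an unacknowledged gap: the claim that one can compute an ERM for $\mc{H}'$ by ``using $w$ to check membership on the sample's finite domain'' conflates \emph{local consistency} with \emph{extendability}. A labeling of the sample domain that avoids $w(\bar u)$ on every $(d+1)$-tuple $\bar u$ drawn from the sample need not extend to any $f \in \mc{H}'$, because points outside the sample can force a violation. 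Concretely, with $d=1$ and three points $a,b,c$, take $w(a,b)=(1,0)$, $w(a,c)=(1,0)$, $w(b,c)=(1,1)$: the singleton labeling $f(a)=1$ is vacuously locally consistent, yet $f(a)=1$ forces $f(b)=1$ and $f(c)=1$, which then hits $w(b,c)$, so no $f\in\mc{H}'$ has $f(a)=1$. Thus your procedure is not an ERM for $\mc{H}'$, and the uniform-convergence guarantee you invoke (ERM over a fixed class of VC dimension $\leq d$) does not apply as stated; one needs a genuinely different argument -- e.g.\ a one-inclusion-graph or transductive analysis over the locally consistent labelings of a double sample -- and that is the substantive content of the cited Theorem 3 of \cite{FWOS}. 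Relatedly, your ``secondary subtlety'' about the SCPAC/CPAC collapse is circular as written: it presupposes the very $\Leftarrow$ implication that has the gap.
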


    Part (1) is Proposition 1 of \cite{OCLWCL}, (2) is Proposition 2 of \cite{FWOS}, and  (3) is Theorem 3 of \cite{FWOS}. Since (3) also proves that improper SCPAC and improper CPAC are equivalent conditions, we will simply write improper S/CPAC when discussing these conditions. Also, from Theorem 3 of \cite{OCLWCL} and Theorem 4 of \cite{FWOS}, PAC $\not\Longrightarrow$ improper S/CPAC $\not\Longrightarrow$ proper CPAC $\not\Longrightarrow$ proper SCPAC (though clearly each of the reverse implications hold), so these are all distinct classes.

    In the effective setting, one might want to require that the hypothesis class be effectively represented in some form. One cannot possibly computably learn a hypothesis class if none of the hypotheses are computable. In \cite{OLWCL}, the following notions of an effective hypothesis class we used. For these definitions (and for the remainder of the paper), fix a computable listing $(\Phi_e)_{e \in \mathbb{N}}$ of the computable functions.

    \begin{definition}
        A hypothesis class $\mc{H} \subseteq \mc{Y}^\mc{X}$ is \emph{computably enumerably representable} (c.e.r) if there exists c.e.\ set of Turing machines such that the set of functions they compute is equal to $\mc{H}$. Similarly, it is \emph{computably representable} (c.r.) if there exists a computable set of Turing machines such that the set of functions they compute is equal to $\mc{H}$.
    \end{definition}
    
    These are the relevant notions for Theorem \ref{thrm:5} below. However there are other notions one could consider, such as $\mc{H}$ being given by an effectively closed set as represented by the infinite paths through a computable tree. For such a hypothesis class to be computably learnable, this tree would need at least some computable paths. This is the intended setting for Theorem \ref{thm:closed}.

    \begin{definition}
        A hypothesis class $\mc{H} \subseteq \mc{Y}^\mc{X}$ is \emph{effectively closed} if there is a computable tree $T \subseteq \{0,1\}^* = 2^{< \mathbb{N}}$ such that $\mc{H} = [T]$ is the set of paths through $T$.
    \end{definition}

    \noindent Recall that $\mc{X} = \mathbb{N}$ and $\mc{Y} = \{0,1\}$, so the paths through $T$ are hypotheses $\mc{X} \to \mc{Y}$.

    \section{The ``on-a-cone'' approach}\label{sec:on-a-cone}

    We begin by fixing some computability-theoretic notation. We use $2^{< \mathbb{N}}$ for the collection of finite binary strings, often denoted $\{0,1\}^*$, and $2^\mathbb{N}$ for Cantor space, the collection of infinite binary strings. We use lower case letters $b,x,y,z \in 2^{\mathbb{N}}$ for infinite binary strings. We identify an infinite binary string $x \in 2^{\mathbb{N}}$ with the corresponding subset $\{i \mid x(i) = 1\}$ of $\mathbb{N}$. Given $x,y \in 2^{\mathbb{N}}$ we write $x \leq_T y$ and say that $x$ is Turing-reducible to $y$ if we can compute $x$ given an oracle $y$. If $x \leq_T y$ and $y \leq_T x$ we say that $x$ and $y$ are Turing-equivalent and write $x \equiv_T y$. The Turing degrees are the equivalence classes of $2^{\mathbb{N}}$ modulo $\equiv_T$. Given $x,y \in 2^{\mathbb{N}}$ their join $x \oplus y$ is the set
    \[ x \oplus y = \{ (0,n) \; \mid \; n \in x\} \cup \{(1,n) \; \mid \; n \in y\}.\]
    This is a set of least Turing degree computing both $x$ and $y$. Given an infinite sequence of sets $(x_i)_{i \in \mathbb{N}}$, their join is
    \[ \bigoplus_{i \in \mathbb{N}} \{(i,n) \; \mid \; n \in x_i\}.\]

    We will also need some notions from descriptive set theory. First, let us recall the Borel sets. We work in Cantor space $2^\mathbb{N}$ with the standard topology. The basic clopen sets are the sets $[\sigma]$ of all extensions of a finite string $\sigma$, and the open sets are all unions of the basic clopen sets. The closed sets are their compliments, equivalently, a set is closed if and only if it is the set of infinite paths through a tree $T \subseteq 2^{< \mathbb{N}}$. The Borel sets are the smallest class of sets containing the open sets and closed under compliments and countable intersections and unions. One can think of the Borel sets are those with a reasonable definition.

    Given a set $A \subseteq 2^\mathbb{N}$, one can consider the two-player Gale-Stewart game $\mc{G}_A$ with $A$ as the payoff set. For this paper, one can take the precise definitions as a black box. This game is an infinitely long game of perfect information. Gale and Stewart showed that if $A$ is open or closed, then one of the two players has a winning strategy. However one can prove using the axiom of choice that there is a set $A$ such that neither player has a winning strategy in $\mc{G}_A$. If one of the two players has a winning strategy in $\mc{G}_A$, then we say that $A$ is \textit{determined}. Martin \cite{BorelDet} proved that all Borel sets are determined.

    \begin{theorem}[Borel determinacy, Martin \cite{BorelDet}]
        Every Borel set is determined.
    \end{theorem}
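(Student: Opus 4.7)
The plan is to establish Borel determinacy by transfinite induction on the Borel hierarchy, following Martin's original approach via the technique of \emph{unraveling}. The base case is the Gale--Stewart theorem for open (equivalently closed) sets, proved by a standard ``non-losing strategy'' argument: if one player has no winning strategy in an open game, then at every position reachable along canonical play the other player can move so as to preserve the property that the first player still has no winning strategy, and one verifies that the resulting canonical play enters (resp.\ avoids) the open payoff set.

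For the inductive step, given a Borel payoff set $A$ one must reduce its determinacy to that of sets of strictly lower Borel rank. Martin's key construction is an \emph{auxiliary game} played on an enlarged tree $T'$ together with a continuous projection $\pi : [T'] \to [T]$ such that (i) the preimage $\pi^{-1}(A)$ is clopen in $[T']$ and (ii) winning strategies transfer between the auxiliary game with payoff $\pi^{-1}(A)$ and the original game with payoff $A$. Such a pair $(T',\pi)$ is called an unraveling of $A$. Closed sets are trivially unraveled by themselves; Martin then shows that if every member of a countable family admits an unraveling, so do their complements and their countable unions, by composing and amalgamating the associated auxiliary trees and inserting moves that let the players declare which piece of the Boolean combination the play should fall into. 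Applying this closure recursively up the Borel hierarchy yields an unraveling of every Borel $A$, and Gale--Stewart applied in the enlarged game then yields determinacy downstairs.

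The main obstacle is the transfinite bookkeeping at limit stages. Each unraveling step enlarges the underlying tree, so reaching Borel rank $\alpha$ requires iterating the tree-of-trees construction roughly $\alpha$ times, and covering all of the Borel hierarchy demands $\omega_1$-many successive applications of the power set operation. By a theorem of H.\ Friedman, this consumption of replacement is essential: Borel determinacy is not provable in Zermelo set theory, so no proof avoiding the transfinite iteration is possible. The delicate technical content of Martin's argument is arranging the auxiliary trees and projections so that continuity of $\pi$, clopenness of $\pi^{-1}(A)$, and the two-way transfer of strategies all persist coherently through this transfinite induction, in particular at limit ordinals where one must take inverse limits of the earlier unravelings without losing any of these properties.
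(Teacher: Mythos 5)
The paper does not prove this result; it cites Martin's theorem as a black box and uses it only as an ingredient in the ``on-a-cone'' machinery, so there is no internal proof to compare against. That said, your sketch is a faithful high-level account of Martin's actual argument: the Gale--Stewart base case for open/closed payoff sets, the unraveling technique via auxiliary games on enlarged trees with continuous covering maps, closure of the class of unravelable sets under complements and countable unions, the resulting induction up the Borel hierarchy, and the observation (due to H.\ Friedman) that the $\omega_1$-fold iteration of the power set operation is genuinely necessary. You have not supplied the substantive technical content -- the precise construction of the auxiliary trees, the verification that $\pi^{-1}(A)$ is clopen, the two-way strategy transfer, and the handling of inverse limits at limit stages -- so this reads as an accurate roadmap rather than a proof, which is entirely appropriate for a theorem the paper itself invokes by citation.
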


    Determinacy for more complicated sets is a further set-theoretic assumption. Analytic determinacy, for example, is equivalent to the existence of $0^\sharp$ \cite{MartinDet,Harrington}. The axiom of determinacy says that all sets are determined; it is incompatible with the axiom of choice. Intuitively however, one can think that the axiom of choice is required to construct an undetermined set.

    Given a set  $A \subseteq 2^\mathbb{N}$, we say that $A$ is \textit{degree-invariant} if whenever $x \in A$ and $y \equiv_T x$, $y \in A$. If $A$ is degree invariant, we can identify it with the corresponding set of Turing degrees $\{ \deg_T(x) \; \mid \; x \in A\}$.

\begin{definition}
    Given $x \subseteq \mathbb{N}$, the \textit{cone above $x$} is
    \[ C_x = \{ y \; \mid \; y \geq_T x \}.\]
\end{definition}

\begin{theorem}[Martin, \cite{MartinTuring}]
    If a degree-invariant set $A \subseteq 2^\mathbb{N}$ is determined, then it either contains a cone or is disjoint from a cone. In particular, every degree-invariant Borel subset of $2^\mathbb{N}$ either contains a cone or is disjoint from a cone.
\end{theorem}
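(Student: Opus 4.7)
The plan is to use the Gale-Stewart game $\mc{G}_A$ itself, exploiting the fact that a winning strategy is a single element of $2^{\mathbb{N}}$ that can serve as the base of the desired cone. Concretely, I set up the game so that players I and II alternate choosing bits, producing an infinite binary sequence $z \in 2^{\mathbb{N}}$, with player I winning if $z \in A$ and player II winning if $z \notin A$. By the determinacy hypothesis, one of the two players has a winning strategy $\sigma \in 2^{\mathbb{N}}$ (viewing strategies as elements of Cantor space via the usual coding).

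Assume first that player I has a winning strategy $\sigma$. I will show that the cone $C_\sigma$ is contained in $A$. Given any $y \geq_T \sigma$, consider the play of $\mc{G}_A$ where player I follows $\sigma$ and player II plays the successive bits of $y$. Let $z$ be the resulting run. On the one hand, $z$ is computable from $\sigma \oplus y$, and since $y \geq_T \sigma$ we get $z \leq_T y$; on the other hand, player II's moves can be read directly off $z$, which recovers $y$, so $y \leq_T z$. Hence $z \equiv_T y$. Since $\sigma$ is winning for player I, $z \in A$, and then degree-invariance of $A$ gives $y \in A$. Thus $C_\sigma \subseteq A$.

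The case where player II has a winning strategy $\sigma$ is entirely symmetric: the analogous construction in which player I plays the bits of any $y \geq_T \sigma$ produces $z \equiv_T y$ with $z \notin A$, so degree-invariance yields $y \notin A$, i.e., $C_\sigma \cap A = \emptyset$. The second sentence of the theorem is then immediate from Borel determinacy (Martin's theorem cited just before the statement).

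The only subtlety I anticipate is the coding conventions: I need to make sure that strategies are coded as reals in a way that makes the resulting run $z$ uniformly computable from $\sigma \oplus y$, and that player II's bits can be extracted from $z$ computably so that $y \leq_T z$. Both are standard and follow once one fixes any reasonable bijection between $2^{<\mathbb{N}}$ and $\mathbb{N}$, so there is no real obstacle beyond bookkeeping.
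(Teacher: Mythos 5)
The paper itself does not supply a proof here; it cites Martin's cone theorem directly. Your argument is the standard (and correct) proof of that theorem, and there is nothing to compare against in the paper's text. The key points --- that the run $z$ of the game where one player follows the winning strategy $\sigma$ and the opponent feeds in the bits of an arbitrary $y \geq_T \sigma$ satisfies $z \equiv_T y$, and that degree-invariance of $A$ then transfers membership from $z$ to $y$ --- are exactly the heart of Martin's original argument, and your handling of the two cases (player I wins $\Rightarrow$ $C_\sigma \subseteq A$; player II wins $\Rightarrow$ $C_\sigma \cap A = \emptyset$) is correct. The coding caveat you flag is genuinely only bookkeeping: a strategy for a Gale-Stewart game on $\{0,1\}$ is a function from finite sequences to $\{0,1\}$, hence a subset of $\mathbb{N}$ under any computable bijection, and the play is uniformly computable from $\sigma \oplus y$ while $y$ is read off every other coordinate of $z$.
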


Thinking of cones as large sets, one can define the $\{0,1\}$-valued Martin's measure on Borel degree-invariant sets by setting $\mu(A) = 1$ if $A$ contains a cone, and $\mu(A) = 0$ if $A$ is disjoint from a cone.

\begin{remark}\label{rem:ctble-additive}
    This measure is countably additive, so that if $\mu(\bigcup_i A_i) = 1$, and each $A_i$ is determined (e.g., Borel), then for some $i$, $\mu(A_i) = 1$. That is, if $\bigcup_i A_i$ contains a cone then, for some $i$, $A_i$ contains a cone. This will be an important property later in the paper.
\end{remark}

\noindent It is not hard to see why this remark is true. Suppose that for all $i$ $\mu(A_i) = 0$, i.e., that no $A_i$ contains a cone. For each $i$ there is a cone with base $b_i$ disjoint from $A_i$. Let $b = \bigoplus b_i$. Then the cone with base $b$ is disjoint from every $A_i$, and so disjoint from $\bigcup_i A_i$. Thus, being disjoint from a cone, $\mu(\bigcup_i A_i) = 0$.
    
    
\medskip
    
We can now consider learning on a cone. We begin with the example from \cite{OLWCL} of a PAC-learnable but not CPAC-learnable hypothesis class. Though that hypothesis class used provability in arithmetic, the example can be more simply formulated as follows. (This formulation was also used in \cite{OCLWCL} where Sterkenburg showed that it was not even improperly CPAC-learnable.) Let $K \subseteq \mathbb{N}$ be the halting problem $K = \{ e \; \mid \; \text{the $e$th program halts}\}$ (though any non-computable c.e.\ set could also be used). Then $K$ has a computable approximation $K_s$ where $K_s$ is the set of (indices for) programs which have halted after $s$ steps of computation. Let
    \[ \mc{H} :=
        \{ h_{s,e} : \text{$e \in K_s$}\}\]
    where
    \[ h_{s,e} (k) = \begin{cases}
        1 & k = 2e \\ 
        1 & k = 2s+1 \\
        0 & \text{otherwise}
    \end{cases}.\]
    Since $K_s$ is computable, we can computably list out $\mc{H}$, and $\mc{H}$ has VC dimension at most $2$. One can argue that $\mc{H}$ is not CPAC learnable. Towards a contradiction, assume that there is a proper CPAC learner for $\mc{H}$ in the agnostic setting. Given $e$, let $\mc{D}_i$ be a distribution with all weight on $(2e,1)$. Then almost surely any training sample contains is a sequence of $(2e,1)$ with label $1$. If $e \in K$, then for sufficiently long samples (of a fixed length known ahead of time) the learner must output a function $h_{s,e}$ where $e \in K_s$. Otherwise, it may output any function from $\mc{H}$. So we can use the learner to compute $K$: for each $e$, feed in a sufficiently long sample sequence consisting of $(2e,1)$ with label 1 to the learner, for which it outputs a function $h = h_{s,e'}$. If $h(2e) = 1$ (so that $e' = e$) then $e \in K$. Otherwise, $e \notin K$.

    One way of relativizing $\mc{H}$ is to consider the relativization of $K$ to an oracle $x$, $K^x = \{ e \; \mid \; \text{the $e$th program with oracle $x$ halts} \}$, and then define
    \[ \mc{H}^x = \{ h_{s,e} \; \mid \; e \in K_s \} .\] 
    The argument above relativizes to show that $\mc{H}^x$ is PAC-learnable, but not properly CPAC learnable relative to $x$ in the agnostic setting. If $x \equiv_T y$ are Turing-equivalent, then everything computable with oracle $x$ is computable with oracle $y$ and vice versa. However, if $x \equiv_T y$ are Turing-equivalent oracles, then even though $K^x \equiv_T K^y$ are Turing-equivalent, we might have $K^x \neq K^y$ and so $\mc{H}^x \neq \mc{H}^y$. This is because the $i$th program with oracle $x$ might be completely different from the $i$th program with oracle $y$, and so one might halt while the other does not. This is essentially capturing the fact that there is no canonical choice for listing out all programs; and the set $K$, and thus what functions $h_{s,i}$ are in $\mc{H}$, depends on this listing. If we choose a different listing, then the hypothesis class $\mc{H}$ obtained will be different. We will say that $x \mapsto \mc{H}^x$ is not degree invariant because $x \equiv_T y \not\Longrightarrow \mc{H}^x = \mc{H}^y$.

    Another way to relativize $\mc{H}$ is to ignore the oracle $x$, letting
    \[ \mc{H}^x = \{h_{s,e} \; \mid \; e \in K_s\}. \]
    This is degree-invariant, $x \equiv_T y \Longrightarrow \mc{H}^x = \mc{H}^y$, because $\mc{H}^x = \mc{H}^y$ for any $x$ and $y$. On the other hand, for any $x \geq_T K$, $\mc{H}^x$ is properly CPAC learnable in the agnostic setting relative to $x$. This is because the learner can use the oracle $x \geq_T K$ and so the learner knows, given $e$, whether or not $e \in K$. So to relativize the initial example $\mc{H}$, it seems that we must either make $\mc{H}^x$ non-degree-invariant, or make $\mc{H}^x$ properly CPAC learnable relative to $x$ for all oracles on the cone $C_K = \{ x \; \mid \; x \geq_T K\}$.

    On the other hand, consider a natural hypothesis class such as the set of computable upper half planes. This class relatives to, given an oracle $x$, the hypothesis class $\mc{H}^x$ of $x$-computable upper half planes. This is degree-invariant, because the $x$-computable upper half planes are the same as the $y$-computable upper half planes. For all $x$, $\mc{H}^x$ is PAC-learnable and CPAC-learnable relative to $x$.

    A general heuristic is that natural classes are given by properties of functions while unnatural classes are given as properties of programs or indices. Whether a function givens an upper half-plane or not does not depend on the program computing the function, but solely on the function itself. On the other hand, whether a hypothesis $h_{s,e}$ is in $\mc{H} = \{h_{s,e} \; \mid \; e \in K_s\} $ depends on the program $e$, and how long it takes to run.

Keeping these motivating examples in mind, we are now ready for the formal definitions.

\begin{definition}
        Let $x \mapsto \mc{H}_x$ be a function $2^\mathbb{N} \to \mc{P}(2^X)$ which associates to each oracle $x \subseteq \mathbb{N}$ a hypothesis class. We denote this function by $\mc{H}$ and say that $\mc{H}$ is a \textit{family of hypothesis classes}. We say that $\mc{H}$ is:
        \begin{enumerate}
            \item \textit{degree-invariant} if whenever $x \equiv_T y$, $\mc{H}_x = \mc{H}_y$.
            \item \textit{Borel} if its graph $\Gamma(\mc{H}) = \{ (x,f) : x \in \mc{H}_x \}$ is Borel.
            \item \textit{closed} if each $\mc{H}_x$ is closed (and similarly for open, $\bfPi^0_2$, and so on).
        \end{enumerate}
    \end{definition}

    \begin{definition}
        Given $\mc{H}_x$ a degree-invariant family of hypothesis classes, and $P$ a property of hypothesis class, we say that $\mc{H}_x$ is $\mathsf{P}$ on a cone if there is $y$ such that for all $z \geq_T y$, $\mc{H}_y$ has property $\mathsf{P}$. For example:
        \begin{enumerate}
            \item $\mc{H}_x$ is \textit{PAC-learnable on a cone} if there is a cone of $x$'s on which $\mc{H}_x$ is PAC-learnable, that is, there is a cone with base $b$ such that for all $x \geq_T b$, $\mc{H}_x$ is PAC-learnable.
            \item $\mc{H}_x$ is \textit{not-PAC-learnable on a cone} if there is a cone of $x$'s on which $\mc{H}_x$ is not PAC-learnable, that is, there is a cone with base $b$ such that for all $x \geq_T b$, $\mc{H}_x$ is not PAC-learnable.
            \item $\mc{H}_x$ is \textit{CPAC-learnable on a cone} if there is a cone of $x$'s on which $\mc{H}_x$ is CPAC$^x$-learnable, i.e., there is a cone with base $b$ such that for all $x \geq_T b$, there is an $x$-computable PAC-learner for $\mc{H}_x$.
            \item $\mc{H}_x$ is \textit{c.e.-represented on a cone} if there is a cone with base $b$ such that for all $x \geq_T b$, $\mc{H}_x$ is c.e.-represented relative to $x$, that is, represented by an $x$-c.e.\ set of $x$-computable functions.
        \end{enumerate}
    \end{definition}

    Note that $\mc{H}_x$ being not-PAC-learnable on a cone is not the same as it not being the case that $\mc{H}_x$ is PAC-learnable on a cone. This is because if $\mc{H}_x$ is not Borel, then
    \[ \{ x \; \mid \; \text{$\mc{H}_x$ is PAC-learnable}\}\]
    might not be Borel, and hence might both not contain a cone and not be disjoint from a cone. 

    In general, if $\mc{H}_x$ is a degree-invariant family of Borel hypothesis classes, and $P$ is a Borel property, then $ \{x \; \mid \; \text{$\mc{H}_x$ is $\mathsf{P}$}\}$
    is a Borel set and thus either contains a cone or is disjoint from a cone. Thus either $\mc{H}_x$ is $\mathsf{P}$ on a cone or $\mc{H}_x$ is not-$\mathsf{P}$ on a cone. On the other hand, if $\mc{H}_x$ is not Borel, or $\mathsf{P}$ is not Borel, then $ \{x \; \mid \; \text{$\mc{H}_x$ is $\mathsf{P}$}\}$ might not be a Borel set. In this case it is possible that it is not true that $\mc{H}_x$ is $\mathsf{P}$ on a cone and it is also not true that $\mc{H}_x$ is not-$\mathsf{P}$ on a cone. (It is never the case that $\mc{H}_x$ is $\mathsf{P}$ on a cone and that $\mc{H}_x$ is also $\neg \mathsf{P}$ on a cone.)
    
    If $\mc{H}_x$ is Borel, then because being PAC learnable is a Borel property, either $\mc{H}_x$ is PAC-learnable on a cone or $\mc{H}_x$ is not-PAC-learnable on a cone, and similarly for CPAC-learnability. One of the main technical issues in the results in this paper is whether the sets involved are determined and hence whether we can conclude that they contain or are disjoint from a cone.

    \section{Positive results}

    \subsection{C.e.-represented hypothesis classes}

    We begin with our theorem most directly addressing the results of \cite{OLWCL}, namely, the c.e.-represented case. We show that if a natural hypothesis class is c.e.-represented and PAC-learnable, then it is SCPAC-learnable.

    \begin{theorem}\label{thrm:5}
        Let $\mc{H}_x$ be a degree-invariant family of hypothesis classes that is c.e.-represented on a cone and PAC-learnable on a cone. Then for all $x$ on a cone, $\mc{H}_x$ is properly SCPAC.
    \end{theorem}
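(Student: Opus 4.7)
My plan is to invoke Theorem~\ref{keyPrevResultsThrm}(1), which characterizes proper SCPAC-learnability as the existence of a computable ERM, and thus to produce an $x$-computable ERM together with a computable sample-size function for $\mc{H}_x$ on a cone of $x$.

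The preliminary steps pin down some uniform data. By the fundamental theorem of statistical learning, $VC(\mc{H}_x) < \infty$ on the PAC-learnability cone; applying the countable additivity of Martin's measure (Remark~\ref{rem:ctble-additive}) to the degree-invariant sets $\{x : VC(\mc{H}_x) \leq d\}$ gives a specific $d \in \mathbb{N}$ such that $VC(\mc{H}_x) \leq d$ on a cone, and a standard VC bound then yields a computable sample-size function depending only on $d$. In the same spirit, the sets $\{x : W_e^x \text{ represents } \mc{H}_x\}$ are degree-invariant and their union over $e$ covers the c.e.-representation cone, so countable additivity produces a fixed index $e^\ast$ such that $W_{e^\ast}^x$ represents $\mc{H}_x$ for $x$ on a cone.

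The central step is to pin down the exact size of each label set $L_x(X) := \{h \res X : h \in \mc{H}_x\}$ for finite $X \subseteq \mathbb{N}$. Using the uniform c.e.-representation $e^\ast$, $L_x(X)$ is $x$-c.e., and since $\mc{H}_x$ depends only on $\deg_T(x)$, so does $|L_x(X)|$. A further application of countable additivity, over the value of $|L_x(X)|$ for each fixed $X$ and then across the countably many finite $X$, yields a single cone on which $|L_x(X)| = h_\infty(X)$ for every finite $X$, where $h_\infty$ is some fixed function.

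Finally I would assemble the ERM. Let $b^\ast = b \oplus h_\infty$, where $b$ is a base for all the cones produced above. For $x \geq_T b^\ast$ I define an $x$-computable proper ERM as follows: on input a sample $S$ with domain $X_S$, look up $h_\infty(X_S)$ using oracle $b^\ast \leq_T x$; enumerate $W_{e^\ast}^x$ and record the distinct labelings of $X_S$ of the form $\Phi_i^x \res X_S$ that appear; stop once $h_\infty(X_S)$ distinct labelings have been recorded, which must occur since that is the true count of achievable labelings; then output an enumerated hypothesis realizing a labeling of minimum empirical risk. The main obstacle is really the central step: a bare $x$-c.e.~enumeration of $L_x(X)$ cannot certify when every achievable labeling has been seen, and the Sauer--Shelah bound $|L_x(X)| = O(|X|^d)$ need not be tight, so degree-invariance is what lets us stabilize $|L_x(X)|$ to a fixed value and then absorb the resulting count function $h_\infty$ into the oracle via a countable join.
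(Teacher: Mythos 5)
Your proof is correct and follows essentially the same strategy as the paper's: fix a single c.e.-index $e^\ast$ by countable additivity, stabilize a countable family of sample-indexed data on a cone, absorb that data into the oracle, and then build the ERM by enumerating $W_{e^\ast}^x$ until enough is seen. The only cosmetic difference is what you stabilize: you pin down the number of achievable labelings $|L_x(X)|$ on each finite domain, whereas the paper directly stabilizes the value $\min_{h \in \mc{H}_x} L_S(h)$ for each sample $S$; both serve the same purpose of telling the learner when the enumeration can stop.
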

    \begin{proof}
        Since $\mc{H}_x$ is c.e.-represented on a cone (say the cone above $y_1$) and PAC-learnable on a cone (say the cone above $y_2$), then on the cone above $y_1 \oplus y_2$ $\mc{H}_x$ is both c.e.-represented and PAC-learnable. In the rest of the proof, for simplicity of notation, we will omit the oracle $y_1 \oplus y_2$ and assume that $\mc{H}_x$ is c.e.-represented and PAC-learnable for all $x$, and prove that for all $x$ the hypothesis class $\mc{H}_x$ is properly SCPAC learnable relative to $x$. Moreover, because Martin's measure is countably additive, we may assume that there is a single $e$ such that for all $x$ the class $\mc{H}_x$ is c.e.-represented by $W_e^x$ (see Remark \ref{rem:ctble-additive}). Indeed,
        \[ 2^\mathbb{N} = \bigcup_e \{ x \; \mid \; \text{$\mc{H}_x$ is represented by the c.e.\ set $W^x_e$}\}\]
        and each set in the union is Borel, and so one of these sets (say the one corresponding to $e$) must contain a cone. For all $x$ on this cone, $\mc{H}_x = \{ \Phi_n^x \; \mid \; n \in W_e^x\}$.
        
        Let $S \in \mc{S}$ be a sample of size $m$. We can partition the Turing degrees as follows:
        \[ 2^\mathbb{N}  = \bigcup_{i=1}^m \left\{x : \min_{h \in \mc{H}_x} L_{S}(h) = \frac{i}{m} \right\}.\]
        These sets are degree-invariant (because $\mc{H}_x$ is) and Borel because $\mc{H}_x$ is c.e.-represented by $W_e^x$. Then one of them contains a cone $C_S$ with base $b_S$, say the set corresponding to $i_S$.
        
        Define the function $f:\mc{S} \mapsto \mathbb{Q}$ such that $f(S) = \frac{i_{S}}{m}$. Then $f(S) = \min_{h \in \mc{H}_x} L_{S}(h)$ for all $x \geq_{T} b_{S}$. So, given any sample $S \in \mc{S}$, $f$ takes $S$ to the minimal empirical risk value that is reached on $S$ on the cone above $b_{S}$.

        Let $b = (\bigoplus_{S \in \mc{S}} b_{S}) \bigoplus f$ and let $x \geq_{T} b$. By the relativized version of Theorem \ref{keyPrevResultsThrm} (1), $\mc{H}_x$ is properly SCPAC$^x$ if and only if $\mc{H}_x$ is PAC learnable (which we know that is is) and there is an $x-$computable ERM for $\mc{H}_x$. Such an $x$-computable ERM exists: Given a sample $S \in \mc{S}$, compute $f(S)$, and look for an index $n \in W_e^x$ such that $L_S(\Phi_n^x) = f(S)$. This $\Phi_n^x \in \mc{H}_x$ is the hypothesis with the minimum empirical risk. Output $\Phi_n^x$.
    \end{proof}

    \subsection{Closed hypothesis classes}

    We now assuming that the hypothesis class is topologically closed, i.e., the limit of any sequence of hypotheses is again a hypothesis. We prove that if a natural hypothesis class is topologically closed and PAC-learnable, then it is SCPAC-learnable.

    \begin{theorem}\label{thm:closed}
        Let $\mc{H}_x$ be a degree-invariant Borel family of hypothesis classes such that for all $x$, $\mc{H}_x$ is closed and PAC-learnable. Then for all $x$ on a cone, $\mc{H}_x$ is properly SCPAC.
    \end{theorem}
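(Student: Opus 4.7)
The plan is to produce, for $x$ on a cone, an $x$-computable empirical risk minimizer (ERM) for $\mc{H}_x$; by the relativization of Theorem \ref{keyPrevResultsThrm}(1) this will yield proper SCPAC$^x$ learnability. Since each $\mc{H}_x$ is closed in $2^{\mathbb{N}}$, we have $\mc{H}_x = [T_x]$ for the pruned tree $T_x := \{\sigma \in 2^{<\mathbb{N}} : [\sigma] \cap \mc{H}_x \neq \emptyset\}$. The main reduction is to show that $x \mapsto T_x$ is $x$-computable with a fixed algorithm on a cone; equivalently, that $\mc{H}_x$ is effectively closed relative to $x$, uniformly in $x$, on a cone.

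Granting this, the ERM is easy to build. For a sample $S = ((x_1,y_1),\ldots,(x_m,y_m))$ set $N = 1 + \max_i x_i$. The finite set $T_x \cap 2^N$ is $x$-computable and, because $T_x$ is pruned, it equals the set of restrictions to $\{0,\ldots,N-1\}$ of hypotheses in $\mc{H}_x$. Hence
\[\min_{h \in \mc{H}_x} L_S(h) \;=\; \min_{\sigma \in T_x \cap 2^N} \tfrac{1}{m}\sum_{i=1}^m \mathbbm{1}[\sigma(x_i) \neq y_i],\]
so one can find some $\sigma^* \in T_x \cap 2^N$ achieving the minimum and then extend it to the leftmost infinite path $h^*$ through $T_x$. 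Since $T_x$ is pruned such an extension exists and is uniformly computable from $T_x$ and $\sigma^*$; thus $h^*$ is an $x$-computable element of $\mc{H}_x$ with minimum empirical risk on $S$, and outputting an index for $h^*$ gives a proper $x$-computable ERM.

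The hard part will be showing the pruned tree is computable on a cone. The statement ``$\sigma \in T_x$'' is \emph{a priori} only analytic: it asserts the existence of $h \in \mc{H}_x$ extending $\sigma$, i.e., it projects the Borel graph $\Gamma(\mc{H})$ along the $h$-coordinate. However, closedness of $\mc{H}_x$ in the compact space $2^{\mathbb{N}}$ makes each section $\mc{H}_x \cap [\sigma]$ compact, so the Arsenin--Kunugui theorem (Borel sets with $K_\sigma$ sections have Borel projection, uniformly in parameters) upgrades this to a Borel function $x \mapsto T_x \in 2^{2^{<\mathbb{N}}}$. By the standard on-a-cone principle, any Borel function $f : 2^{\mathbb{N}} \to 2^{\mathbb{N}}$ is $y$-computable for some single oracle $y$, and so on the cone above $y$ it is $x$-computable with one fixed algorithm; applied to $x \mapsto T_x$ this yields the desired uniform computability, after which the previous paragraph produces the ERM and Theorem \ref{keyPrevResultsThrm}(1) finishes the proof. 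Without closedness this whole approach collapses because the projection defining $T_x$ would be only analytic, so compactness of sections is what is really doing the work.
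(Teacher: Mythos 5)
Your overall architecture matches the paper's: use a selection/uniformization theorem (you call it Arsenin--Kunugui; the paper uses Kunugi--Novikov, Kechris Theorem~28.7/Exercise~28.9) to get a Borel map $x \mapsto T_x$ with $[T_x] = \mc{H}_x$, then build an ERM from the pruned tree and invoke Theorem~\ref{keyPrevResultsThrm}(1). Your ERM construction from a computable pruned tree is fine. But the step you flag as ``the hard part'' has a genuine gap: the principle you invoke, ``any Borel function $f \colon 2^{\mathbb{N}} \to 2^{\mathbb{N}}$ is $y$-computable for some single oracle $y$, and so on the cone above $y$ it is $x$-computable with one fixed algorithm,'' is false. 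The Turing jump $x \mapsto x'$ is Borel (indeed $\bfSigma^0_2$), yet there is no oracle $y$ and Turing functional $\Phi$ with $x' = \Phi^{x\oplus y}$: taking $x = y$ would give $y' \leq_T y$, a contradiction. So you cannot conclude that $T_x$ is uniformly $x$-computable from Borelness alone.

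What makes the argument go through --- and what you never actually use in this step --- is the \emph{degree-invariance} of $\mc{H}_x$, hence of $T_x$. Since $T_x = T_y$ whenever $x \equiv_T y$, for each finite string $\sigma$ the set $B_\sigma = \{x : \sigma \in T_x\}$ is a degree-invariant Borel set, so by Martin's cone theorem either $B_\sigma$ or its complement contains a cone, say with base $b_\sigma$. Setting $T = \{\sigma : B_\sigma \text{ contains a cone}\}$ and $b = \bigoplus_\sigma b_\sigma$, one gets $T_x = T$ for all $x \geq_T b$ --- the map $x \mapsto T_x$ is not merely computable on a cone, it is \emph{constant} on a cone. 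Then $T \leq_T x$ for all $x \geq_T b \oplus T$, and your ERM construction applies. This is exactly how the paper proceeds. So the fix is to replace the false general principle with a $\sigma$-by-$\sigma$ application of Martin's theorem using degree-invariance; without degree-invariance the theorem is simply false, and without Martin's theorem Borelness of $x \mapsto T_x$ does not give computability.
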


    This follows from the following two propositions.

    \begin{proposition}
        Let $\mc{H}_x$ be a degree-invariant Borel family of hypothesis classes such that for all $x$, $\mc{H}_x$ is closed. Then there is a tree $T$ with no dead ends such that $\mc{H}_x = [T]$ for all $x$ on a cone.
    \end{proposition}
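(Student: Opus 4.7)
The plan is to produce, for each oracle $x$, the canonical pruned tree
\[ T_x = \{\sigma \in 2^{<\mathbb{N}} : [\sigma] \cap \mc{H}_x \neq \emptyset\}. \]
Because $\mc{H}_x$ is closed in $2^{\mathbb{N}}$, we have $\mc{H}_x = [T_x]$ and $T_x$ has no dead ends by construction. Since $x \mapsto \mc{H}_x$ is degree-invariant, so is $x \mapsto T_x$. The goal is then to show that on a cone the tree $T_x$ does not depend on $x$, and to take $T$ to be this common value.

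Fix $\sigma \in 2^{<\mathbb{N}}$ and consider the degree-invariant set $A_\sigma = \{x : \sigma \in T_x\}$. This is the projection to the first coordinate of $\Gamma(\mc{H}) \cap (2^{\mathbb{N}} \times [\sigma])$, a Borel subset of $2^{\mathbb{N}} \times 2^{\mathbb{N}}$ whose $x$-sections $\mc{H}_x \cap [\sigma]$ are closed, and hence compact, in $2^{\mathbb{N}}$. By the Arsenin--Kunugui projection theorem (the projection of a Borel set with $K_\sigma$ vertical sections is Borel), the set $A_\sigma$ is itself Borel, and so by Martin's cone theorem it either contains a cone or is disjoint from one. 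Define
\[ T := \{\sigma \in 2^{<\mathbb{N}} : A_\sigma \text{ contains a cone}\}. \]

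To finish, I stitch the $\sigma$-by-$\sigma$ statements together. For each $\sigma$ choose a base $b_\sigma$ such that, on the cone above $b_\sigma$, the indicator of ``$\sigma \in T_x$'' is constant and equal to ``$\sigma \in T$''. Setting $b = \bigoplus_\sigma b_\sigma$, for every $x \geq_T b$ and every $\sigma$ we get $\sigma \in T_x \iff \sigma \in T$, so $T_x = T$ and $\mc{H}_x = [T_x] = [T]$. Since each $T_x$ has no dead ends and $T$ equals $T_x$ for any $x$ on the cone, $T$ has no dead ends either. The main obstacle is verifying that $A_\sigma$ is Borel rather than merely analytic: it is precisely the closedness (hence compactness) of the vertical sections $\mc{H}_x \cap [\sigma]$, combined with Borelness of $\Gamma(\mc{H})$, that makes Arsenin--Kunugui applicable and lets the cone dichotomy be invoked under Borel determinacy alone, without any stronger set-theoretic hypothesis.
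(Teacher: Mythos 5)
Your proof is correct and follows essentially the same outline as the paper's: form the canonical pruned tree $T_x$ with $[T_x]=\mc{H}_x$, run the cone dichotomy separately for each $\sigma$ on the set $A_\sigma=\{x:\sigma\in T_x\}$, define $T$ from the cone-side answers, and join the bases $b_\sigma$. The only difference is how Borelness of $A_\sigma$ is established: the paper invokes the Kunugi--Novikov/Saint-Raymond result in the uniformization form (Kechris Thm.~28.7, Ex.~28.9) to get a Borel map $x \mapsto T_x$ into trees that might not be pruned, and then prunes, whereas you apply the projection form of the same theorem (projections of Borel sets with $K_\sigma$ sections are Borel) directly to $\Gamma(\mc{H})\cap(2^{\mathbb{N}}\times[\sigma])$. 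Your variant is marginally cleaner since you work with the canonical pruned tree from the start and never need the pruning step; otherwise the two arguments are the same.
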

    \begin{proof}
        When we say that $\mc{H}_x$ is a Borel family, recall that we mean that the set $\mc{H} = \{ (x,h) : h \in \mc{H}_x \}$ is Borel. The sets $\mc{H}_x$ are the sections of this set, and are all closed.
        
        We use a theorem of Kunugi and Novikov from descriptive set theory (see Theorem 28.7 of \cite{Kechris}, particularly in the form of Exercise 28.9) which says that given a Borel set $\mc{H} \subseteq 2^\mathbb{N} \times 2^\mathbb{N}$ such that every section $\mc{H}_x$ is closed, there is a Borel map $x \mapsto T_x \subseteq 2^{< \mathbb{N}}$ where $T_x$ is a tree with $[T_x] = \mc{H}_x$. In the general form of this theorem, the sections $\mc{H}_x$ are subsets of Baire space $\mathbb{N}^\mathbb{N}$ and the trees $T_x$ might not be pruned. In our case, $\mc{H}_x$ is a subset of Cantor space $2^\mathbb{N}$ and so the trees $T_x$ are finitely branching and we may assume that they are pruned. This corresponds to replacing $T_x$ by 
        \[ T_x^* = \{ \sigma \in T \;\mid\; \text{for all $n \geq |\sigma|$, there is $\tau \in T_x$ with $\tau \succ \sigma$ of length $n$ }\}.\]
        Let
        \[ B_\sigma = \{ x \in 2^\mathbb{N} \; \mid \; \sigma \in T_x\}.\]
        Both $B_\sigma$ and its compliment are Borel and so one of the two contains a cone. Let $b_\sigma$ be the base of this cone.

        Let
        \[ T = \{ \sigma : \text{$B_\sigma$ contains a cone}\}.\]
        Let $b = \bigoplus b_\sigma$. Then, for all $x \geq_T b$,
        \[ \{ \sigma : \text{there is an $f \in \mc{H}_x$ with $f \succ \sigma$} \} = T.\]
        Since $\mc{H}_x$ is closed, $\mc{H}_x = [T]$.
    \end{proof}

    \begin{proposition}
        Let $\mc{H}$ be a closed hypothesis class such that
        \[ T = \{ \sigma \in 2^{< \mathbb{N}} : \exists h \in \mc{H} \; h \succ \sigma\}\]
        is computable. Then $\mc{H}$ is properly SCPAC.
    \end{proposition}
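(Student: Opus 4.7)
The plan is to construct an explicit computable ERM for $\mc{H}$ and then apply Theorem \ref{keyPrevResultsThrm}(1). Since $\mc{H}$ is PAC-learnable (inherited from the ambient assumption under which this proposition will be invoked; equivalently $\mc{H}$ has finite VC dimension), it suffices to exhibit a computable learner $A:\mc{S}\to\mc{H}$ such that $A(S)$ attains $\min_{h\in\mc{H}} L_S(h)$ for every sample $S$.

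First I would observe that the tree $T$ has no dead ends: by definition, if $\sigma\in T$ then some $h\in\mc{H}$ extends $\sigma$, and the restriction $h\!\upharpoonright\!(|\sigma|+1)$ is an element of $T$ properly extending $\sigma$. In particular, for every $\sigma\in T$ at least one of $\sigma\conc 0,\sigma\conc 1$ lies in $T$. Since $T$ is computable, this means we can uniformly computably produce, from any $\sigma\in T$, an infinite path $h_\sigma\in[T]=\mc{H}$ extending $\sigma$, by the greedy procedure: at each stage, extend the current node by $0$ if that extension is in $T$, and by $1$ otherwise.

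Next I would define the ERM. Given a sample $S=((x_1,y_1),\dots,(x_m,y_m))$, let $N=\max_i x_i$. The empirical risk $L_S(h)$ depends only on the values $h(x_1),\dots,h(x_m)$, and every assignment of those values is realized by some $\sigma\in\{0,1\}^{N+1}$. Since $T$ is computable, I can enumerate all $\sigma\in\{0,1\}^{N+1}\cap T$ and, among these, select one $\sigma^*$ minimizing $\frac{1}{m}|\{i:\sigma(x_i)\neq y_i\}|$. Because $\mc{H}\neq\emptyset$ (in the nontrivial case), at least one such $\sigma$ exists, and the minimum over $\sigma\in\{0,1\}^{N+1}\cap T$ of this quantity equals $\min_{h\in\mc{H}} L_S(h)$. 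Then I output $A(S):=h_{\sigma^*}$, using the greedy extension procedure of the previous paragraph. This gives a total computable function $A:\mc{S}\to\mc{H}$ whose output lies in $\mc{H}$ and has minimum empirical risk, so it is a proper computable ERM. Applying Theorem \ref{keyPrevResultsThrm}(1) finishes the proof.

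I do not anticipate a serious obstacle: the entire argument hinges on the tree being both computable and pruned, both of which fall out immediately from the definition of $T$. The one subtlety worth flagging is that while $T$ is written as $\{\sigma:\exists h\in\mc{H},h\succ\sigma\}$, which is a priori only a $\boldsymbol{\Sigma}^1_1$ description, the hypothesis of the proposition supplies us with the decisive computability of $T$; without this, the greedy extension step could fail to be computable even though each $h\in\mc{H}$ is a genuine path.
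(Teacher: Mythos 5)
Your proof is correct and takes essentially the same approach as the paper's: use the computable pruned tree $T$ to read off the achievable labellings of the sample, select one of minimal empirical risk, and extend it computably to a leftmost path in $[T]=\mc{H}$, then invoke Theorem~\ref{keyPrevResultsThrm}(1). The paper states this more tersely; you have simply spelled out the greedy extension and the reduction of the minimization to strings of bounded length.
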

    \begin{proof}
        By Theorem \ref{keyPrevResultsThrm} (1), $\mc{H}$ is properly SCPAC learnable if and only if it is PAC learnable and there is a computable $ERM$ for $\mc{H}$. It is easy to compute an ERM for $\mc{H}$ using $T$ as follows. Given a sample $S$, the tree $T$ is enough to tell us all of the possible labellings of $S$ by functions in $\mc{H}$. We can pick one with the least error, and since $T$ has no dead ends, we can compute a path in $T$ agreeing with this labeling (say, the leftmost such path in $T$).
    \end{proof}

    \begin{proof}[Proof of Theorem \ref{thm:closed}]
        By the first proposition, there is a tree $T$ with no dead ends such that $\mc{H}_x = [T]$ for all $x$ on a cone, say with base $b$. Increasing the cone from $b$ to $b^* = b \oplus T$, for all $x$ on the cone above $b^*$, by Proposition 4.5 we have that $\mc{H}_x$ is properly SCPAC$^x$.
    \end{proof}

    Though the theorem of Kunugi and Novikov also holds for open sets, we do not get any useful theorem out of this as if a hypothesis class $\mc{H}$ is open and non-empty then $\mc{H}$ is not PAC. This is because there is some finite partial hypothesis $h_0$ such that every hypothesis extending $h_0$ is in $\mc{H}$. Thus $\mc{H}$ shatters arbitrarily large sets.
    
    \subsection{Arbitrary hypothesis classes under the axiom of determinacy}

    In this section we relax our setting to that of improper learning, i.e., the learner does not need to output a hypothesis from the hypothesis class. From a computational standpoint, this is much easier. We prove the following theorem under the axiom of determinacy; following it, we discuss what happens if the the family of hypothesis classes $\mc{H}_x$ is Borel.

    \begin{theorem}[ZF + Axiom of Determinacy]\label{thm:det}
        Let $\mc{H}_x$ be a degree-invariant family of hypothesis classes that is PAC-learnable on a cone. Then for all $x$ on a cone, $\mc{H}_x$ is improperly SCPAC.
    \end{theorem}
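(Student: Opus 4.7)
The plan is to apply the relativization of Theorem \ref{keyPrevResultsThrm} (3), which says that improper SCPAC$^x$ is equivalent to the existence of an $x$-computable $d$-witness for $\mc{H}_x$ for some $d \in \mathbb{N}$. So it suffices to build, on a cone, an $x$-computable $d$-witness for $\mc{H}_x$ for some fixed $d$. The first step is to fix such a $d$: PAC-learnability on a cone means $VC(\mc{H}_x) < \infty$ on some cone, and the degree-invariant sets $A_d = \{x : VC(\mc{H}_x) \leq d\}$ cover this cone as $d$ ranges over $\mathbb{N}$. By countable additivity of Martin's measure, which holds for all degree-invariant sets under AD since every set is determined, some single $A_d$ already contains a cone. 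Fix this $d$.

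Next, for each tuple $\bar{u} \in \mc{X}^{d+1}$ and each labeling $\bar{\ell} \in \mc{Y}^{d+1}$, the set $B_{\bar{u},\bar{\ell}} = \{x : \text{no } h \in \mc{H}_x \text{ satisfies } h(u_i) = \ell_i \text{ for all } i\}$ is degree-invariant. Because $VC(\mc{H}_x) \leq d$ on the cone just chosen, $\bigcup_{\bar{\ell}} B_{\bar{u},\bar{\ell}}$ contains that cone for every $\bar u$; finite additivity then produces a specific $\bar{\ell}_{\bar u} \in \mc{Y}^{d+1}$ such that $B_{\bar u, \bar{\ell}_{\bar u}}$ contains a cone, say with base $b_{\bar u}$. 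I would define $w : \mc{X}^{d+1} \to \mc{Y}^{d+1}$ by $w(\bar u) = \bar{\ell}_{\bar u}$, take $b = w \oplus \bigoplus_{\bar u \in \mc{X}^{d+1}} b_{\bar u}$, and observe that for every $x \geq_T b$ and every $\bar u$, the labeling $w(\bar u)$ is not realized by any $h \in \mc{H}_x$. Thus $w$ is a $d$-witness for $\mc{H}_x$, and $w \leq_T b \leq_T x$ so $w$ is $x$-computable. The relativization of Theorem \ref{keyPrevResultsThrm} (3) then yields that $\mc{H}_x$ is improperly SCPAC$^x$ for every $x$ on this cone.

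The main obstacle, and the reason AD is invoked, is that no definability is assumed of the family $\mc{H}_x$. Without further hypotheses the sets $A_d$ and $B_{\bar u, \bar\ell}$ need not be Borel or even determined, so in ZFC alone the Martin dichotomy between containing a cone and being disjoint from a cone can fail for them, and both the uniform choice of $d$ and the choice of each $\bar{\ell}_{\bar u}$ could collapse. Under AD every set is determined, Martin's measure is a countably additive $\{0,1\}$-valued measure on all degree-invariant sets, and the countable and finite additivity steps above go through. Selecting countably many bases $b_{\bar u}$ and joining them uses only countable choice for sets of reals, which is available here; the final join with $w$ gives a single base $b$ above which $w$ serves as the desired $d$-witness.
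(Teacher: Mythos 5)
Your proposal is correct and follows essentially the same route as the paper's proof: fix a uniform VC bound $d$ on a cone via Martin's measure, then for each tuple $\bar u$ use determinacy and (finite) additivity to select on a cone a single unrealizable labeling $\bar\ell_{\bar u}$, join the bases together with the resulting function $w$, and invoke the relativization of Theorem~\ref{keyPrevResultsThrm}~(3). The only cosmetic difference is that the paper first defines $w_x(\bar u)$ as the lexicographically least unrealizable labeling for each $x$ and then partitions $2^\mathbb{N}$ by its value, whereas you work directly with the sets $B_{\bar u,\bar\ell}$, whose union covers the cone; both reductions are immediate, and your explicit remark that the countable join of the bases $b_{\bar u}$ needs only countable choice for reals (available under ZF+AD) is a worthwhile clarification.
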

    \begin{proof}
        As we did previously, we may assume for simplicity that $\mc{H}_x$ is PAC-learnable for all $x$.

        Each $\mc{H}_x$ is PAC-learnable and hence has finite VC-dimension; moreover, by invariance, if $x \equiv_T y$, then $\mc{H}_x = \mc{H}_y$ have the same VC-dimension. Thus, for each $d$, the set $VC_{d}=\{x: VC(\mc{H}_x) = d\}$ is degree-invariant and Borel. It follows that there is a number $d$ and a cone on which all $\mc{H}_x$ have VC-dimension $d$. So as a simplification we may assume that all $\mc{H}_x$ have VC-dimension $d$.

        Given $x$, because $VC(\mc{H}_x)=d$, for each $\bar{u} \in \mathbb{N}^{d+1}$ there is a labeling $\bar{\ell} \in 2^{d+1}$ witnessing that $\bar{u}$ cannot be shattered by $\mc{H}_x$, i.e., the functions in $\mc{H}_x$ cannot give the labeling $i$ to $\bar{u}$. Moreover, there are finitely many such labels, so there is a lexicographically least such label. Call this labeling $w_x(\bar{u})$.

        Fix $\bar{u} \in \mathbb{N}^{d+1}$ and observe that we can partition the Turing degrees as follows:
        \[2 ^{\mathbb{N}}  = \bigcup_{\bar{\ell} \in 2^{d+1}} \{x : w_{x}(\bar{u})=\bar{\ell} \} \]
        These sets are degree-invariant and so (using the axiom of determinacy) there is a cone of $x$'s on which $w_x(\bar{u}) = \bar{\ell}$. Doing this for each $\bar{u}$, and taking the intersection of all of these cones, we get a function $w_{x}$ and a cone such that for all $x$ on that cone, $w_x = w$. Increasing the cone, we may also assume that $w$ is computable.

        By the relativized version of Theorem \ref{keyPrevResultsThrm} (3) $\mc{H}_x$ is improper SCPAC$^x$ learnable if and only if there is an $x$-computable witness function for $\mc{H}_x$. We have just shown that, for all $x$ on a cone, $w$ is an $x$-computable witness function and thus $\mc{H}_x$ is improper SCPAC$^x$ learnable. 
    \end{proof}

    If the hypothesis classes are Borel, then we require only analytic determinacy. The analytic sets are the continuous images of Borel sets, and analytic determinacy is not provable in ZFC (but has weak large cardinal strength, being equivalent to the existence of $x^\sharp$ for every $x$). This is widely believed to be consistent with ZFC. $V=L$ is an example of an extension of ZFC inconsistent with analytic determinacy.

    \begin{corollary}[ZF + Analytic Determinacy]\label{cor:det}
        Let $\mc{H}_x$ be a degree-invariant Borel family of hypothesis classes such that for all $x$ on a cone, $\mc{H}_x$ is PAC-learnable. Then for all $x$ on a cone, $\mc{H}_x$ is improperly SCPAC.
    \end{corollary}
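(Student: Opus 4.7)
The plan is to mimic the proof of Theorem \ref{thm:det}, but to be careful that every degree-invariant set to which I apply Martin's theorem is $\boldsymbol{\Sigma}^1_1$, so that analytic determinacy suffices in place of full AD. The Borel hypothesis on the family $\mc{H}$ means that ``$(x,h) \in \Gamma(\mc{H})$'' is a Borel relation in $(x,h)$, and this is exactly the structural input that keeps all the relevant sets analytic. By joining bases throughout, I can simplify by assuming that $\mc{H}_x$ is PAC-learnable for \emph{all} $x$, as in the proof of Theorem \ref{thm:det}.

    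First I would bound the VC-dimension on a cone. For each $d$, the set
    \[ V_d = \{x : VC(\mc{H}_x) \geq d\} = \{x : \exists \bar{u} \in \mathbb{N}^d \ \forall \bar{\ell} \in 2^d \ \exists h \ ((x,h) \in \Gamma(\mc{H}) \wedge h(\bar{u}) = \bar{\ell})\} \]
    is $\boldsymbol{\Sigma}^1_1$ (a countable existential and finite universal over a single existential over Cantor space of a Borel matrix) and degree-invariant, and hence determined under analytic determinacy. The union $\bigcup_d V_d^c = \{x : VC(\mc{H}_x) < \infty\}$ contains a cone by the PAC-learnable-on-a-cone hypothesis, so by countable additivity of Martin's measure on determined sets (Remark \ref{rem:ctble-additive}), some $V_{d+1}^c$ contains a cone. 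Fix that $d$ and work on a cone where $VC(\mc{H}_x) \leq d$.

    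Next, for each pair $(\bar{u},\bar{\ell}) \in \mathbb{N}^{d+1} \times 2^{d+1}$, the degree-invariant set
    \[ R_{\bar{u},\bar{\ell}} = \{x : \exists h \in \mc{H}_x \ h(\bar{u}) = \bar{\ell}\} \]
    is $\boldsymbol{\Sigma}^1_1$ and so determined. Let $r(\bar{u},\bar{\ell}) = 1$ if $R_{\bar{u},\bar{\ell}}$ contains a cone and $0$ otherwise, and let $b_{\bar{u},\bar{\ell}}$ be a base witnessing the dichotomy. Define $w(\bar{u})$ to be the lexicographically least $\bar{\ell}$ with $r(\bar{u},\bar{\ell}) = 0$; this is well-defined because $VC(\mc{H}_x) \leq d$ on a cone forces at least one non-realizable labeling of each $\bar{u}$. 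Letting $b$ be the join of $w$, the cone base from the previous step, and all the $b_{\bar{u},\bar{\ell}}$, for every $x \geq_T b$ the function $w$ is $x$-computable and is a $d$-witness for $\mc{H}_x$, so the relativized form of Theorem \ref{keyPrevResultsThrm}(3) gives that $\mc{H}_x$ is improperly SCPAC$^x$.

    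The main obstacle is precisely this complexity bookkeeping. The AD proof stabilises the entire function $w_x(\bar{u})$ in one step using the set $\{x : w_x(\bar{u}) = \bar{\ell}\}$, but that condition is the conjunction of an analytic and a coanalytic statement, and such intersections need not be analytic and so are not known to be determined from analytic determinacy alone. The fix is to never ask a joint question about $w_x$: instead, for each $(\bar{u},\bar{\ell})$ separately, I ask the cleanly $\boldsymbol{\Sigma}^1_1$ question ``is $\bar{\ell}$ realized on $\bar{u}$ by some $h \in \mc{H}_x$?'' and then reconstruct $w$ combinatorially from the countably many one-bit answers.
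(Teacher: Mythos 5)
Your proof is correct and follows the same broad strategy as the paper: redo the argument of Theorem~\ref{thm:det}, using the Borel hypothesis on $\Gamma(\mc{H})$ to keep every degree-invariant set to which you apply Martin's theorem within the pointclass covered by analytic determinacy. Where you genuinely improve on the paper's exposition is in the complexity bookkeeping. The paper asserts that the relevant sets ``are $\bfPi^1_1$,'' but the set $\{x : w_x(\bar u) = \bar\ell\}$ literally as defined (``$\bar\ell$ is not realized'' \emph{and} ``every lexicographically smaller label is realized'') is a conjunction of a $\bfPi^1_1$ condition and a $\bfSigma^1_1$ condition, which is not in general either $\bfSigma^1_1$ or $\bfPi^1_1$. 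You sidestep this cleanly by never asking a joint question about $w_x$: you only ever ask the single-bit $\bfSigma^1_1$ question ``is $\bar\ell$ realized on $\bar u$ in $\mc{H}_x$?'' and then recover $w$ combinatorially from the countably many stable answers. (One could also rescue the paper's phrasing by working with the downward-closed unions $\{x : w_x(\bar u) \leq_{\mathrm{lex}} \bar\ell\}$, which are finite unions of $\bfPi^1_1$ sets and hence $\bfPi^1_1$, but your $R_{\bar u,\bar\ell}$ decomposition is the simpler route.) The well-definedness of $w(\bar u)$ and the verification that $w$ is a $d$-witness on the big cone both go through as you indicate; the only thing I'd add is to say explicitly that if all $2^{d+1}$ labels were realized cofinally, then the intersection of the corresponding finitely many cones would yield an $x$ with $\bar u$ shattered, contradicting $VC(\mc{H}_x)\le d$. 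Otherwise the argument is complete.
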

    \begin{proof}
        Since each $\mc{H}_x$ is Borel, the sets
        \[\bigcup_{i \in 2^{d+1}} \{x : x\geq_{T} y \land w_{x}(\bar{u})=i \}\]
        are $\bfPi^1_1$ and hence determined by Analytic Determinacy. This suffices to complete the proof of the previous theorem without assuming any more determinacy.
    \end{proof}


    \section{Negative Results}

    In this section we give some negative results saying that there are examples of natural PAC learnable but not CPAC learnable classes not falling under the previous theorems. The reader should keep in mind that while these are in some sense natural from a computational standpoint, they are still, e.g., not c.e.-represented and are not exactly natural from the standpoint of learning theory.

    \begin{theorem}
        There is a degree-invariant Borel family of hypothesis classes $\mc{H}_x$ such that for all $x$, $\mc{H}_x$ is topologically $F_\sigma$ ($\bfSigma_2$), PAC-learnable, and consists only of $x$-computable functions, but $\mc{H}_x$ is not CPAC-learnable on any cone.
    \end{theorem}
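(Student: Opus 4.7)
The plan is to relativize the counterexample $\mc{H} = \{h_{s,e} : e \in K_s\}$ from the introduction to a degree-invariant family. The naive relativization $\mc{H}^x = \{h_{s,e} : e \in K^x_s\}$ fails to be degree-invariant because $K^x$ depends bit-by-bit on the choice of Turing machine enumeration with oracle $x$, and the constant class becomes properly CPAC-learnable relative to any $x \geq_T K$. Both defects stem from using a specific listing rather than a degree-invariant description of the relativized halting problem, so the task is to produce a degree-invariant ``hard'' indexing set that continues to obstruct CPAC-learning as we pass to larger oracles.

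I would set $\mc{H}_x = \{h_{s,e} : e \in V_x\}$, where $h_{s,e}$ is again the characteristic function of $\{2e, 2s+1\}$, and $V_x \subseteq \mathbb{N}$ is a set with three desired properties: (i) $V_x$ depends only on the Turing degree of $x$, (ii) $V_x$ is arithmetical in $x$ (so the graph of $\mc{H}_x$ is Borel), and (iii) $V_x$ is not $x$-c.e.\ on a cone. A first candidate is
$V_x = \{e : \exists f \leq_T x \text{ total $0/1$-valued with } \Phi_e^f \text{ total}\}$,
which is manifestly degree-invariant because $\{f : f \leq_T x\}$ depends only on $\deg_T(x)$, and is $\Sigma^0_3$ in $x$ after rewriting $\exists f \leq_T x$ as $\exists i$ with $\Phi_i^x$ total. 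With this choice each $h_{s,e}$ is computable (hence $x$-computable), the class is countable and so $F_\sigma$, and the VC-dimension is at most $2$ because no three points can simultaneously be labeled $1$.

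To argue the failure of proper CPAC-learning, I would adapt the Sterkenburg argument used earlier in the paper. If $\mc{H}_x$ admitted a properly CPAC$^x$-learner, then by Theorem \ref{keyPrevResultsThrm}(2) it would admit an $x$-computable asymptotic ERM $A$; feeding $A$ samples concentrated on $(2e, 1)$ of increasing size, the output would satisfy $A(S_m)(2e) = 1$ for all sufficiently large $m$ exactly when some $h_{s,e}$ lies in $\mc{H}_x$, i.e.\ when $e \in V_x$. This yields an $x$-c.e.\ enumeration of $V_x$, so property (iii) suffices to finish.

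The main obstacle is establishing property (iii). The easy padding reduction, mapping $e$ to $\tilde{e}$ with $\Phi_{\tilde{e}}^f = \Phi_e$ ignoring its oracle, shows only $\mathrm{Tot} \leq_1 V_x$ and so $V_x \geq_T 0''$, which is useless once $x$ lies above $0''$. I would therefore refine $V_x$ so that it encodes the relativized totality $T(x) = \{e : \Phi_e^x \text{ total}\}$: for instance, by incorporating an extra coordinate that certifies a pair of indices witnessing $f \leq_T x$ and $x \leq_T f$, so that a program $\Phi_e^f$ with oracle such an $f$ can simulate $\Phi_e^x$ via the certified reduction. This gives $T(x) \leq V_x$ and hence $V_x \geq_T x''$ on a cone; since $x''$ is not $x$-c.e., $V_x$ is not $x$-c.e.\ on that cone, so $\mc{H}_x$ is not properly CPAC-learnable there. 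The delicate point is that the padding must preserve degree-invariance while still transmitting the oracle $x$ through $f$, which is the step I expect to require the most care.
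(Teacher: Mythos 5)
Your approach is genuinely different from the paper's, and it contains a gap at the exact step you flag as delicate; I do not think the gap can be closed along the lines you sketch. The paper does not try to relativize the $\{h_{s,e}\}$ example via a degree-invariant index set. Instead it takes $\mc{F}$ to be the set of $\{0,1\}$-valued Dedekind cuts of irrationals on $\mathbb{Q}$ (a closed VC-dimension-$1$ class), sets $\mc{H}_x = \{f \in \mc{F} : f \equiv_T x\}$, and derives non-CPAC-learnability on a cone from the disjointness of the $\mc{H}_x$: choosing the lexicographically least asymptotic ERM $A_x$ for each $x$ gives a degree-invariant assignment, and applying the cone argument coordinatewise to $(S,q) \mapsto A_x(S)(q)$ forces $A_x(S)$ to be a single fixed function $g_S$ on a cone, which then fails to lie in $\mc{H}_x$ for cone-many $x$. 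The content of degree-invariance is carried by the hypotheses themselves (via their Turing degree), not by an index set, so no analogue of your $V_x$ is ever needed.

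The gap in your proposal is step (iii), and it is a real obstruction, not a technicality. Your first candidate $V_x = \{e : \exists f \leq_T x, \; \Phi_e^f \text{ total}\}$ is monotone in $\leq_T$ and stabilizes on a cone: for each $e$ in $V_\infty := \{e : \exists f, \Phi_e^f \text{ total}\}$ fix a witness $f_e$, and set $b = V_\infty \oplus \bigoplus_e f_e$; then for all $x \geq_T b$ we have $V_x = V_\infty \leq_T x$, so $V_x$ is $x$-computable, hence $x$-c.e., on the cone above $b$. Your fix -- threading ``certified'' indices $(i,j)$ for $f \leq_T x$ and $x \leq_T f$ so that $\Phi_e^f$ can simulate $\Phi_e^x$ -- aims to make $V_x$ compute $T(x) = \{e : \Phi_e^x \text{ total}\}$, but $T(x)$ is not degree-invariant (it depends on the chosen representative of $\deg_T(x)$), so a degree-invariant $V_x$ cannot literally equal or encode $T(x)$ in a way recoverable without already fixing $x$. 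What you actually need is a degree-invariant, uniformly arithmetical map $x \mapsto V_x$ that is not $x$-c.e. on any cone; any candidate must escape the stabilization phenomenon above, and establishing that is a nontrivial problem in the vicinity of Martin's-conjecture-style results rather than a routine padding argument. The part of your proposal that does work is the reduction from a proper CPAC learner (asymptotic ERM) to an $x$-c.e. enumeration of $V_x$: that adaptation of the Sterkenburg argument is correct as far as it goes.
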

    \begin{proof}
        Let $\mc{F}$ be the set of all functions $f \colon \mathbb{Q} \to \{0,1\}$ such that:
        \begin{enumerate}
            \item there are $p,q$ such that $f(p) = 0$ and $f(q) = 1$;
            \item if $p < q$, then if $f(p) = 1$ then $f(q) = 0$;
            \item there is no maximal element $p$ with $f(p) = 0$ and no minimal element $q$ with $f(q) = 1$.
        \end{enumerate}
        It is easy to see that $\mc{F}$ has VC dimension 1, so any subset of $\mc{F}$ will also have VC dimension 1. $\mc{F}$ is also Borel.
        
        To each $f \in \mc{F}$ we can associate a unique irrational $r$ such that $f(p) = 0$ for $p < r$ and $f(p) = 1$ for $p > r$. It is easy to see that $\mc{F}$ has VC dimension 1. Define, for each $x$, $\mc{H}_x = \{ f \in \mc{F} : f \equiv_T x\}$. Note that this is degree-invariant and that each $\mc{H}_x$ has VC dimension 1 (hence is PAC). Moreover, each $\mc{H}_x$ is countable and hence $F_\sigma$. 
        

        Now we argue that $\mc{H}_x$ is not CPAC on a cone. Suppose towards a contradiction that $\mc{H}_x$ was CPAC on a cone. Then, for each $x$ on a cone, there would be an $x$-computable asymptotic ERM for $\mc{H}_x$. Given an $x$-computable function which is a purported asymptotic ERM for $\mc{H}_x$, we can check in a Borel way whether it is in fact an asymptotic ERM (i.e., whether the computable function satisfies Definition \ref{defn:ERM}). Given $x$, choose one of these asymptotic ERMs for $\mc{H}_x$ as follows. List the samples $S \in \mc{S}$ as $S_1,S_2,\ldots$, and list the rationals as $q_1,q_2,q_3,\ldots$. Then we can think of an asymptotic ERM as a binary sequence $\mathbb{N} \times \mathbb{N} \to \{0,1\}$ via $(m,n) \mapsto (S_m,q_n) \mapsto A(S_m)(q_n)$. For each $x$, choose the lexicographically least asymptotic ERM $A_x$ for $\mc{H}_x$. If $x \equiv_T y$, then $A_x = A_y$, and because we chose $A_x$ and $A_y$ using only their properties as functions (and not the programs that compute them) we will have chosen the same asymptotic ERM $A_x = A_y$.

        Now for each sample $S$ and $q \in \mathbb{Q}$, consider
        \[ \{ x \; \mid \; A_x(S)(q) = 0\} \text{ and } \{ x \; \mid \; A_x(S)(q) = 1\}. \]
        This is degree-invariant (since the choice of $A_x$ is degree-invariant) One of these two must contain a cone, say with base $b_{S,q}$ and let $g_S(q)$ record the corresponding value $0$ or $1$. Let $b = \bigoplus b_{S,q}$. For all $x \geq_T b$, and all $S \in \mc{S}$, $A_x(S) = g_S$ is the same function. But for $x \nequiv_T y$, $\mc{H}_x$ and $\mc{H}_y$ are disjoint, and so for most $x \geq_T b$, $A_x(S) \notin \mc{H}_x$. Thus, for those $x$, $A_x$ cannot be an asymptotic ERM for $\mc{H}_x$. But of course $A_x$ was chosen to be an asymptotic ERM for all $x$, yielding a contradiction. So it is not true that $\mc{H}_x$ is CPAC on a cone.
    \end{proof}

    The hypothesis classes $\mc{H}_x$ are $\bfSigma^0_2$/$F_\sigma$ because they are countable, but this is making strong use of the fact that this is boldface $\bfSigma^0_2$ rather than lightface $\Sigma^0_2$. (The family $\mc{F}$ is not $\bfSigma^0_2$.) We already know what happens if the hypothesis class is countably and computably listable, but we can ask about conditions in between, e.g., $\Sigma^0_2$-represented hypothesis classes as we will now define.
    
    Recall that the $\Sigma^0_2$ subsets of $\mathbb{N}$ are the projections onto the first coordinate of a co-c.e.\ set $W \subseteq \mathbb{N} \times \mathbb{N}$. We can form a listing of the $\Sigma^0_2$ sets
     \[ U_e = \{ m \in \mathbb{N} \;\mid\; \text{$\exists n$ such that $\Phi_e(m,n)$ does not halt}\}.\]
    This also relativizes to give a listing $U_e^x$ of the $\Sigma^0_2(x)$ sets.

     \begin{definition}
         A hypothesis class $\mc{H}$ is $\Sigma^0_2$-represented if there is a $\Sigma^0_2$ listing of the hypotheses in $\mc{H}$, i.e., a $\Sigma^0_2$ set $U$ such that $\mc{H} = \{ \Phi_e \; \mid \; e \in U\}$.
     \end{definition}

     \begin{question}
         Is there a natural hypothesis class which is $\Sigma^0_2$-represented and PAC learnable but not CPAC learnable?
     \end{question}

    For $G_\delta$ hypothesis classes, we can construct an example where for all $x$ the hypothesis class $\mc{H}_x$ has no $x$-computable hypotheses. This is in some sense cheating as $\mc{H}_x$ has no hope of being CPAC$^x$-learnable. We leave it as an open question (Question \ref{qestion-pi2} below) whether there is a more reasonable counterexample.

    \begin{theorem}
        There is a degree-invariant Borel family of hypothesis classes $\mc{H}_x$ such that for all $x$, $\mc{H}_x$ is $\Pi^0_2$ and PAC-learnable, but $\mc{H}_x$ is not CPAC-learnable on any cone.
    \end{theorem}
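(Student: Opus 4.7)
The plan is to repeat the construction from the previous theorem but thin out the hypothesis class to remove every $x$-computable function. Let $\mc{F}$ be the class of irrational-threshold functions on $\mathbb{Q}$ used in the proof of the preceding theorem (which has VC-dimension $1$ and is already topologically $\bfPi^0_2$, since conditions (1)--(3) in its definition have complexity $\bfSigma^0_1$, $\bfPi^0_1$, and $\bfPi^0_2$ respectively). Define
\[ \mc{H}_x = \{ f \in \mc{F} : f \not\leq_T x \}. \]
Degree-invariance is immediate, because the collection of $x$-computable functions depends only on the Turing degree of $x$. The family is Borel, since $\{(x,f) : f \in \mc{H}_x\}$ is defined by an arithmetical formula in $(x,f)$. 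And because $\mc{H}_x \subseteq \mc{F}$, it has VC-dimension at most $1$ and is therefore PAC-learnable.

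To check the $\bfPi^0_2$ requirement, note that for each fixed $x$ the class of $x$-computable functions in $2^{\mathbb{N}}$ is countable, hence an $F_\sigma$ (i.e.\ boldface $\bfSigma^0_2$) subset of Cantor space, as a countable union of singletons. Its complement is therefore $\bfPi^0_2$, and intersecting with the $\bfPi^0_2$ class $\mc{F}$ leaves us inside $\bfPi^0_2$.

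For the failure of CPAC-learnability on any cone, recall that a (proper) CPAC$^x$-learner is by definition an $x$-computable map whose output on every sample is an index for an $x$-computable hypothesis lying in $\mc{H}_x$. But $\mc{H}_x$ contains no $x$-computable function by construction, so no such learner can exist for any $x$; a fortiori none exists on a cone, for any cone. The only mild subtlety in the argument is verifying the topological complexity, which uses that countable subsets of Cantor space are boldface $F_\sigma$ (a fact that depends strongly on boldface rather than lightface complexity). This matches the authors' remark that the construction is in some sense "cheating" because $\mc{H}_x$ is engineered to have no $x$-computable member at all, and is precisely the reason a more learning-theoretically natural counterexample is left open in what follows.
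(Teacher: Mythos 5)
Your proof is correct and follows essentially the same approach as the paper: remove all $x$-computable functions from a fixed PAC-learnable class of small topological complexity, so that no proper CPAC$^x$-learner can exist for any $x$. The only (inessential) difference is that the paper uses the closed class of monotone functions on $\mathbb{Q}$ as the ambient class $\mc{F}$ rather than reusing the $\bfPi^0_2$ irrational-threshold class from the preceding theorem; both yield a $\bfPi^0_2$ family by the same countable-set-is-$F_\sigma$ observation.
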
    
    \begin{proof}
        Let $\mc{F}$ be the set of all functions $f \colon \mathbb{Q} \to \{0,1\}$ such that if $p < q$, then if $f(p) = 1$ then $f(q) = 1$. Note that $\mc{F}$ is closed. It is also easy to see that $\mc{F}$ has VC dimension 1 as given any two elements $p < q$ there is no $f \in \mc{F}$ with $f(p) = 1$ and $f(q) = 0$. Now let $\mc{H}_x = \mc{F} - \{ h : \text{$h$ is $x$-computable}\}$. For all $x$, $\mc{H}_x \subseteq \mc{F}$ is PAC-learnable. Since there are only countably many computable $x$-computable functions, and any countable set is $F_\sigma$, $\mc{H}_x$ is $\bfPi^0_2$. But clearly for any $x$ the hypothesis class $\mc{H}_x$ cannot be properly $CPAC^x$-learnable because it does not contain any $x$-computable functions.
    \end{proof}

    \begin{question}\label{qestion-pi2}
        If a natural hypothesis class is $\Pi^0_2$-represented, and the $x$-computable functions are dense, must it be CPAC learnable?
    \end{question}

    There are also natural questions about whether, e.g., a counterexample to Theorem \ref{thm:det} or Corollary \ref{cor:det} can be found in some model of ZFC, e.g., under $V = L$.

    \bibliography{References}
	\bibliographystyle{alpha}
 
\end{document}